\newtheorem{theorem}{Theorem}
\newtheorem{corollary}{Corollary}
\newtheorem{definition}{Definition}
\title{Local Fragments, Global Gains: Subgraph Counting using Graph Neural Networks}
\author{%
  Shubhajit Roy \\
  Indian Institute of Technology Gandhinagar\\
  \texttt{royshubhajit@iitgn.ac.in} \\
  % examples of more authors
   \And
   Shrutimoy Das\\
  Indian Institute of Technology Gandhinagar\\
  \texttt{shrutimoydas@iitgn.ac.in} \\
   \And
  Binita Maity\\
  Indian Institute of Technology Gandhinagar\\
  \texttt{binitamaity@iitgn.ac.in} \\
   \And
  Anant Kumar\\
  Indian Institute of Technology Gandhinagar\\
  \texttt{kumar\_anant@iitgn.ac.in} \\
   \And
  Anirban Dasgupta\\
  Indian Institute of Technology Gandhinagar\\
  \texttt{anirbandg@iitgn.ac.in} \\
}
\begin{document}

\maketitle

\begin{abstract}
Subgraph counting is a fundamental task for analyzing structural patterns in graph-structured data, with important applications in domains such as computational biology and social network analysis, where recurring motifs reveal functional and organizational properties. In this paper, we propose localized versions of the Weisfeiler-Leman (WL) algorithms to improve both expressivity and computational efficiency for this task. We introduce Local $k$-WL, which we prove to be more expressive than $k$-WL and at most as expressive as $(k+1)$-WL, and provide a characterization of patterns whose subgraph and induced subgraph counts are invariant under Local $k$-WL equivalence. To enhance scalability, we present two variants---Layer $k$-WL and Recursive $k$-WL---that achieve greater time and space efficiency compared to applying $k$-WL on the entire graph. Additionally, we propose a novel fragmentation technique that decomposes complex subgraphs into simpler subpatterns, enabling the exact count of all induced subgraphs of size at most $4$ using only $1$-WL, with extensions possible for larger patterns when $k>1$. Building on these ideas, we develop a three-stage differentiable learning framework that combines subpattern counts to compute counts of more complex motifs, bridging combinatorial algorithm design with machine learning approaches. We also compare the expressive power of Local $k$-WL with existing GNN hierarchies and demonstrate that, under bounded time complexity, our methods are more expressive than prior approaches.
\end{abstract}
\section{Introduction}

Graphs naturally model relational and structural data that arise across diverse domains, from social network analysis and combinatorial optimization to computational biology, particle physics, and protein folding \citep{dill2008protein}. Graph Neural Networks (GNNs) have emerged as powerful methods for learning graph representations for downstream tasks such as classification and link prediction. The expressive power of GNNs has been extensively studied, with message-passing GNNs shown to be equivalent to the $1$-dimensional Weisfeiler–Leman (WL) test \citep{morris2019wl}, a classical combinatorial tool for graph isomorphism. More generally, $k$-GNNs have been shown to correspond to the $k$-WL test \citep{weisfeiler1968reduction}, and thus, throughout this paper, we use $k$-WL interchangeably to denote the corresponding $k$-GNN hierarchy. In practice, the expressiveness of $k$-WL is determined by its ability to distinguish non-isomorphic graphs and capture subgraph patterns. However, while $(k+1)$-WL is provably more expressive than $k$-WL, both the time and space complexity of these algorithms grow exponentially in $k$, making higher-order WL infeasible in large-scale settings.

A central motivation driving this work lies in the task of \emph{subgraph counting}, which is central to understanding complex graph structures. Counting specific subgraphs provides insights into graph similarity and functionality: in computational biology, this includes motifs critical for protein binding and molecular discovery; in social networks, structures such as triangles and stars reveal community dynamics and information propagation. Unfortunately, detecting and counting subgraphs — a generalization of the clique problem — is NP-complete in general. Prior works have therefore focused on specific families of patterns or restricted graph classes \citep{shervashidze2009efficient, JMLR:v12:shervashidze11a, arvind2017graph, https://doi.org/10.48550/arxiv.1805.02089, https://doi.org/10.48550/arxiv.2006.09252, DBLP:conf/stacs/KomarathKMS23}. At the same time, research has sought to enrich GNN hierarchies, including $S_k$-GNNs \citep{papp2022theory}, which augment nodes with counts of induced subgraphs of size up to $k$, and $M_k$-GNNs \citep{papp2022theory, huang2023boosting}, which modify graphs via marked or deleted vertices.

Traditional GNNs, being limited to the power of $1$-WL, cannot count even simple motifs such as triangles or cycles beyond length three \citep{xu2019powerfulgraphneuralnetworks, morris2021weisfeilerlemanneuralhigherorder}. Subgraph-based GNN models \citep{zhao2022from, zhang2021nested, frasca} have emerged as extensions that apply GNNs on extracted local subgraphs or multiple partitions of the input graph, thereby improving expressiveness. While these approaches capture more fine-grained structures, they suffer from scalability concerns since GNNs must be run repeatedly over many subgraphs, creating prohibitive memory and time overhead.

To overcome these limitations, we propose \emph{localized} variants of Weisfeiler–Leman algorithms that balance expressiveness and computational efficiency. Instead of running $k$-WL across the entire graph, we restrict its application to neighborhoods, resulting in the \emph{Local $k$-WL}. Complementary to this, we introduce a novel \textbf{fragmentation} technique, which decomposes complex motifs into simpler subpatterns whose counts can be reliably obtained. Remarkably, we prove that all induced subgraphs of size at most 4 can be counted exactly using only $1$-WL, with natural extensions to larger motifs for higher $k$.

In addition, we design a differentiable three-stage learning framework that operationalizes this fragmentation principle: (1) identify required subpatterns, (2) count them using Local $k$-WL, and (3) aggregate counts into the global motif count. This approach provides both theoretical guarantees and compatibility with gradient-based learning, bridging combinatorial counting algorithms with machine learning paradigms. Finally, we analyze the expressive power of our methods relative to existing GNN hierarchies, demonstrating that under bounded complexity, our models achieve strictly higher expressivity.

\subsection{Our Contributions}

We summarize our key contributions:
\begin{enumerate}
    \item \textbf{Expressiveness of Local $k$-WL:} We formally introduce Local $k$-WL, which applies $k$-WL to $r$-hop neighborhoods $G_v^r$ rooted at $v \in V.$ We provide, for the first time, both upper and lower bounds for its expressiveness and characterize precisely the subgraph patterns that can be distinguished and counted.
    
    \item \textbf{Layer $k$-WL:} A scalable variant where $k$-WL is restricted to pairs of consecutive BFS layers, reducing time and space overhead compared to standard Local $k$-WL.
    
    \item \textbf{Recursive WL:} A hierarchical alternative to $k$-WL. We first apply $1$-WL to partition nodes, and then run $(k-1)$-WL on smaller vertex sets. This achieves higher expressiveness than $(k-1)$-WL while maintaining better scalability than $k$-WL.
    
    \item \textbf{Fragmentation for Subgraph Counting:} We propose a decomposition technique that reduces the counting of complex motifs to that of simpler subpatterns. For example, we reduce $K_4$ counting to repeated triangle counting. We prove that all induced subgraphs of size at most 4 can be counted with $1$-WL and extend this methodology to larger patterns.
    
    \item \textbf{Comparative Expressiveness:} We provide a rigorous comparison of Local $k$-WL, Layer $k$-WL, and Recursive WL against existing GNN hierarchies such as $S_k$ and $M_k$. Our results show that, under comparable complexity bounds, our models are strictly more expressive.
\end{enumerate}

In summary, our work advances subgraph counting in GNNs by combining the theoretical rigor of WL with scalable localized techniques, while introducing a new fragmentation paradigm that makes counting feasible, differentiable, and efficient for real-world graph applications.

\paragraph{Outline of the paper :} Section \ref{prelim} introduces some of the terms used throughout the paper. 
In Section \ref{various gnn}, we introduce the localized variants of the $k-$WL algorithm and analyze their space and time complexities. Section \ref{exp_power_gnn} gives theorems that characterize the expressiveness of the localized $k-$WL variants proposed in our work. In Section \ref{subgraph_counting_sec}, we characterize the expressiveness of our methods in terms of subgraph and induced subgraph counting. We also discuss how to count the occurrences of $H$ in $G,$ using localized algorithms. We discuss the fragmentation approach in Section \ref{fragmentation_sec}, followed by a theoretical comparison of GNN models in Section \ref{sec:compare_mk_local}. The model architecture in Section \ref{sec:model}. Discusson on implementation details, hyperparameters and results of our experiments is in Section \ref{sec:experiments} and conclude the paper with Section \ref{conclusion}.

\section{Preliminaries}
\label{prelim}

We consider a simple undirected graph $G = (V,E)$ with vertex set $V$ and edge set $E$. For basic graph-theoretic definitions, we refer the reader to \citep{west2001introduction}. The neighbourhood of a vertex $v \in V$ is the set of all adjacent vertices, denoted by $N_G(v)$ (or $\mathcal{N}_G(v)$). The \emph{closed neighbourhood}, denoted $N_G[v]$, includes $v$ itself. The degree of a node $v$ is denoted by $d_v$. The maximum distance from $v$ to any other vertex is its \emph{eccentricity}, and the minimum eccentricity over all vertices gives the \emph{radius} of the graph. A graph in which every vertex has the same degree is called a \emph{regular graph}.  

A graph $H$ is a \emph{subgraph} of $G$ if $V(H)\subseteq V(G)$ and $E(H)\subseteq E(G)$. The subgraph \emph{induced} on $S \subseteq V(G)$, denoted $G[S]$, contains vertex set $S$ with all edges in $G$ whose endpoints lie in $S$. The induced subgraph on an $r$-hop neighbourhood of a vertex $v$ is denoted by $G_v^r$. The hop parameter $r$ depends on the pattern being counted. For example, $r=1$ suffices for counting triangles, while $r=2$ is required for counting $C_4$. Attributed subgraphs (or \emph{motifs}) are subgraphs whose vertices or edges carry additional labels or colours.  

A \emph{graph homomorphism} from $H$ to $G$ is a function $f: V(H) \to V(G)$ such that $\{u,v\}\in E(H)$ implies $\{f(u),f(v)\}\in E(G)$. The set of all homomorphic images of a pattern $H$ is called the \emph{spasm} of $H$. Two graphs $G$ and $H$ are \emph{isomorphic} if there exists a bijection $f: V(G)\to V(H)$ preserving adjacency. The \emph{orbit} of a vertex $v$ in $G$, denoted $Orbit_G(v)$, is the set of vertices to which $v$ can be mapped under automorphisms of $G$.  

\subsection{Graph Parameters}
Many hard graph problems become tractable on restricted graph classes characterized by structural parameters. We briefly review key notions.

\textbf{Treewidth.} A \emph{tree decomposition} of $G$ expresses it as a tree of vertex “bags” satisfying: (i) every vertex of $G$ appears in some bag, (ii) every edge is contained in some bag, and (iii) for each vertex $v$, bags containing $v$ form a connected subtree. The \emph{width} of a decomposition is the maximum bag size minus one, and the \emph{treewidth} $tw(G)$ is the minimum width across all decompositions. Computing $tw(G)$ is NP-hard, but fixed-parameter algorithms exist \citep{korhonen2022single, korhonen2022improved}. The \emph{hereditary treewidth} $htw(H)$ of a pattern $H$ is the maximum treewidth among its homomorphic images.

\textbf{Planar graphs and genus.} A graph is \emph{planar} if it can be drawn without crossing edges, or equivalently, if it excludes $K_5$ and $K_{3,3}$ as minors. More generally, the \emph{Euler genus} denotes the minimal surface on which a graph can be embedded without edge crossings.

\textbf{Clique-width and rank-width.} For dense graphs, \emph{clique-width} provides a measure of complexity, but recognizing graphs of clique-width at most $k\geq 4$ is NP-hard. \emph{Rank-width}, introduced by Robertson and Seymour, offers an alternative based on rank decompositions of the adjacency matrix. It is known that bounded clique-width implies bounded rank-width, and vice versa \citep{oum2017rank}.

\subsection{Weisfeiler–Leman and Local $k$-WL}

The Weisfeiler–Leman (WL) test \citep{weisfeiler1968reduction} is a colour refinement algorithm for testing graph isomorphism. Its $k$-dimensional extension, $k$-WL, refines colours of $k$-tuples of vertices, with expressiveness increasing in $k$. Applying $k$-WL to a full graph with $n$ vertices requires $O(n^{k+1}\log n)$ time. In contrast, the \emph{Local $k$-WL} algorithm applies $k$-WL to restricted neighbourhoods $G_v^r$, reducing complexity. For graphs of maximum degree $d$, local $k$-WL can be run in $O(n \cdot d^{r(k+1)} \log d)$ time.  

\subsection{Graph Neural Networks}

Graph Neural Networks (GNNs) \citep{DBLP:journals/corr/KipfW16} generalize neural architectures to graphs using recursive \texttt{MESSAGE} and \texttt{AGGREGATE} functions. At the $\ell$-th iteration, node embeddings are updated as:
\begin{equation}
    X_v^{(\ell)} = \texttt{UPDATE}^{(\ell)} \left( 
    X_v^{(\ell-1)}, \;
    \texttt{AGGREGATE}^{(\ell)} \Big\{ 
    \texttt{MESSAGE}^{(\ell)} \big(X_v^{(\ell-1)}, \, X_u^{(\ell-1)}\big) 
    \;\big|\; u \in N_G(v) \Big\}\right).
\end{equation}
It is known that standard message-passing GNNs are bounded in expressiveness by $1$-WL, motivating higher-order and subgraph-based extensions.
\section{Weisfeiler Leman Algorithm}
\label{WL_SECTION}
Weisfeiler-Leman (WL) is a well-known combinatorial algorithm that has many theoretical and practical applications. Color refinement($1-$WL) was first introduced in 1965 in \citep{morgan1965generation}. The algorithm goes as follows:

\begin{itemize}
    \item Initially, color all the vertices as color 1.
    \item In the next iteration $i,$ we color the vertices by looking at the number of colors of vertices adjacent to each vertex $v,$ in the $(i-1)$th iteration, as $$C_i(v)=(C_{i-1}(v),\{\{C_i(w)\}\}_{w\in N_G(v)}) $$
    We assign a new color to the vertices, according to the unique tuple it belongs to. This partitions the vertex set in every iteration according to their color classes.
    \item The algorithm terminates if there is no further partition. We call the color set a \emph{stable} color set.
    \item We can also observe that if two vertices get different colors at any stage $i,$ then they will never get the same color in the later iterations. We can observe that the number of iterations is at most $n$ as a vertex set ,$V(G),$ can be partitioned at most $n$ many times. 
    \item The color class of any vertex $v\in V(G)$ can appear at most $1+\log n$ times and the running time is $\mathcal{O}(n^2\log n)$ \citep{immerman2019kdimensional}.
\end{itemize}

In case we need to run only $h$ iterations and stop before getting the stable color, then the running time is $O(nh)$. 

The same idea was extended by Weisfeiler and Leman in which instead of coloring vertex, they colored all the two tuples based on edge, non-edge and $(v,v)$. In later iteration, the color gets refined for each two tuples based on their neighbourhood and common neighbourhood. This partition the set of two tuples of vertices. The iteration in which no further partition is being done are called \emph{stable coloring}. Weisfeiler Leman algorithm which is known as $2-$WL algorithm.

Similar approach was extended later for coloring $k-$tuples and then do refinement of coloring in later iterations. 

\begin{definition}
Let $\vec{x}= (x_1, \dots , x_k)\in V^k$,$y\in V$ , and $1\leq j\leq k$. Then, let $x[j, y]\in V^k$ denote the $k-$tuple obtained from $x$ by replacing $x_j$ by $y$. The $k-$tuples $\vec{x}[j, y]$ and $\vec{x}$ are said to be $j-$neighbors for any $y\in V$. We also say $\vec{x}[j, y]$ is the j-neighbor of $\vec{x}$ corresponding to $y$.
\end{definition}
\begin{itemize}
    \item Color all the $k-$tuple vertices according to their isomorphic type. Formally, $(v_1,v_2,....,v_k)$ and $(w_1,w_2,....,w_k)$ get the same color if $v_i=v_j$ then $w_i=w_j$ and also, if $(v_i,v_j)\in E(G),$ then $(w_i,w_j)\in E(G)$.
    \item In every iteration, the algorithm updates the color of the tuple after seeing the color of its adjacent $k$ tuple vertices.
    $$C_{i+1}^k(\vec{v}) :=(C_i^k(\vec{v},M_i(\vec{v}) $$ where $M_i(\vec{v})$ is the multiset $$\{\{(C_i^k(v_1,v_2,...,v_{k1},w),...,C_i^k(v_1,v_2,..,w,..,v_k),...,C_i^k(w,v_2,...,v_k))\mid w\in V\}\}$$
    \label{k-dimension update}
    \item The algorithm terminates if there is no further partition. We call the color set a stable color set.
    \item We also observe that if two tuples get different colors at any stage $i,$ then they will never get the same color in the later iterations. We can observe that the number of iterations is at most $n^k$ as $V^k$ can be partitioned at most $n^k$ many times. 
    \item The color class of any vertex $\vec{v}\in V^k$ can appear at most $\mathcal{O}(k\log n)$ times and running time is $\mathcal{O}(k^2n^{k+1}\log n)$ \citep{immerman2019kdimensional}.
\end{itemize}

Two graphs $G$ and $H$ are said to be $k-$WL equivalent ($G\simeq_k H$), if their color histogram of the stable colors matches. We say that $G$ is $k-$WL identifiable if there doesn't exist any non-isomorphic graphs that are $k-$WL equivalent to $G$.

 Color refinement ($1-$WL) can recognise almost all graphs \citep{babai1980random}, while $2-$WL can recognise almost all regular graphs \citep{bollobas1982distinguishing}. The power of $WL$ increases with an increase in the value of $k$. The power of $k-$WL to distinguish two given graphs is same as with counting logic $C^{k+1}$ with $(k+1)-$variable. Also, the power of $k-$WL to distinguish two non-isomorphic graphs is equivalent to spoiler's winning condition in $(k+1)-$bijective pebble game. Recently, \citep{Dell2018LovszMW} has shown that the expressive power of $k-$WL is captured by homomorphism count. It has been shown that $G_1\simeq_{k} G_2 $ if and only if $Hom(T,G_1) = Hom (T,G_2),$ for all graphs $T$ of treewidth at most $k$.

The graphs that are identified by $1-$WL are \emph{Amenable} graphs. There is a complete characterization of the amenable graphs in \citep{arvind2017graph,kiefer2015graphs}. In the original algorithm, we initially colour all the vertices with colour $1$. However, if we are given a coloured graph as input, we start with the given colours as the initial colours. Also, we can color the edges and run $1-$WL \citep{kiefer2015graphs}.

Even if ${k-WL}$ may not distinguish two non-isomorphic graphs, two ${k-WL}$ equivalent graphs have many invariant properties. It is well known that two ${1-WL}$ equivalent graphs have the same maximum eigenvalue. Two graphs that are ${2-WL}$ equivalent are co-spectral and have the same diameter. Recently, V. Arvind et al. have shown the invariance in terms of subgraph existence and counts \citep{arvind2020weisfeiler}. They show the complete characterization of subgraphs whose count and existence are invariant for ${1-WL}$ equivalent graph pairs. They also listed down matching, cycles, and path count invariance for ${2-WL}$. Also, there is a relation between homomorphism count and subgraph count \citep{curticapean2017homomorphisms}. The count of subgraphs is a function of the number of homomorphisms from a set of all homomorphic images of patterns. \emph{Hereditary treewidth} of a graph is defined as the maximum treewidth over all homomorphic images. So, if two graphs are ${k-WL}$ equivalent, then the count of all subgraphs whose $htw(G)$ is almost $k$ is the same. However, running ${k-WL}$ takes $O(k^2 \cdot n^{k+1}logn )$ time and $O(n^k)$ space \citep{immerman2019kdimensional}. So, it is not practically feasible to run ${k-WL}$ for large $k$, for graphs.

The expressive power of $k-$WL is equivalent to first-order logic on $(k+1)$ variables with a counting quantifier. Let $G=(V,E)$ where $V$ is a set of vertices and $E$ is a set of edges. In logic, we define $V$ as the universe and $E$ as a binary relation. In \citep{cai1992optimal}, they have proved that the power to distinguish two non-isomorphic graphs using $k-$WL is equivalent to $C^{k+1}$, where $C^{k+1}$ represents first-order logic on $(k+1)$ variables with counting quantifiers (stated in \ref{pebble-wl-relation}). To prove this, they define a bijective $k-$pebble game, whose power is equivalent to $C^{k}$. 

\textbf{Bijective k-Pebble Game}
\label{PEBBLE_BIJECTIVE}

The bijective k-Pebble game ($BP_{k}(G, H)$) has been discussed in \citep{kiefer2020power,cai1992optimal,rankwidth_WL}.
Let graphs $G$ and $H$ have the same number of vertices and $k\in \mathbb{N}$. Let $v_i,v \in V(G)$ and $w_i,w \in V(H)$. 
\begin{definition}
    The position of the game in a bijective pebble game is the tuples of the vertices where the pebbles are placed.
\end{definition}

The bijective $k-$pebble game is defined as follows:
\begin{enumerate}
    \item Spoiler and Duplicator are two players.
    \item Initially, no pebbles are placed on the graphs. So, the position of the game is ((),()) (the pairs of empty tuples.)
    \item The game proceeds in the following rounds as follows:
    \begin{enumerate}
        \item Let the position of the game after the $i^{th}$ round be $((v_1,...,v_l),(w_1,w_2,...,w_l))$. Now, the Spoiler has two options: either to play a pebble or remove a pebble. If the Spoiler wants to remove a pebble, then the number of pebbles on the graph must be at least one and if Spoiler decides to play a pebble then number of pebbles on that particular graph must be less than $k$.
        
        \item If the Spoiler wants to remove a pebble from $v_i,$ then the current position of the game will be $((v_1,v_2,...v_{i-1},v_{i+1},..,v_l),(w_1,w_2,...w_{i-1},w_{i+1},..,w_l))$. Note that, in this round, the Duplicator has no role to play.
        
        \item If the Spoiler wants to play a pair of pebbles, then the Duplicator has to propose a bijection $f:V(G)\xrightarrow{} V(H)$ that preserves the previous pebbled vertices. Later, the Spoiler chooses $v\in V(G)$ and sets $w=f(v)$. The new position of the game is $((v_1,...v_l,v),(w_1,w_2,...,w_l,w))$.
    \end{enumerate}
\end{enumerate}
The Spoiler wins the game if for the current position $((v_1,...v_l,v),(w_1,w_2,...,w_l,w)),$ the induced graphs are not isomorphic. If the game never ends, then the Duplicator wins. The equivalence between the bijective $k-$pebble game and $k-$WL was shown in the following theorem.
\begin{theorem}\citep{cai1992optimal}
\label{pebble-wl-relation}
Let $G$ and $H$ be two graphs. Then $G\simeq_k H$ if and only if the Duplicator wins the pebble game $BP_{k+1}(G, H).$
\end{theorem}

A stronger result, namely, the equivalence between the number of rounds in the bijective $(k+1)-$pebble game and the iteration number of $k-$WL was stated in the following theorem.
\begin{theorem}\citep{kiefer2020power}
\label{Iteration_number_logic}
Let $G$ and $H$ be graphs of the same size. The vertices may or may not be coloured. Let $\vec{u} := (u_{1}, \dots , u_{k}) \in (V (G))^{k}$ and $\vec{v} := (v_{1}, \dots , v_{k}) \in
(V (H))^{k}$ be any two arbitrary elements. Then, for all $i \in \mathbb{N}$, the following are equivalent :
\begin{enumerate}
    \item The color of $\vec{u}$ is same as the color of $\vec{v}$ after running $i$ iterations of $k-$WL.
    \item For every counting logic formulae with $(k+1)$ variables of quantifier depth at most $i$, $G$ holds the formula if and only if $H$ does so.
    \item Spoiler does not win the game $BP_{k+1}(G, H)$ with the initial configuration $(\vec{u}, \vec{v})$ after at most $ i$ moves.
\end{enumerate}
\end{theorem}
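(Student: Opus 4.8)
The plan is to prove the three-way equivalence by induction on the iteration/move/depth parameter $i$, showing that each of (1), (2), (3) is equivalent to a single combinatorial condition: that the ordered pair $(\vec{u},\vec{v})$ has the same ``$i$-type'', where the $i$-type records exactly the information $k$-WL has extracted after $i$ iterations. Reducing all three conditions to this common pivot avoids a fragile cyclic chain of implications and lets the induction carry a single hypothesis. The non-iterative equivalence of \Cref{pebble_wl_relation} is then the $i\to\infty$ specialization (running WL to its stable coloring), so the genuine new content is tracking the refinement level step by step.

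For the base case $i=0$ all three conditions collapse to the same atomic statement about $(\vec{u},\vec{v})$. The $0$-iteration color of a $k$-tuple is its isomorphism type, so (1) says the map $u_j\mapsto v_j$ preserves equalities and adjacencies. A counting formula of quantifier depth $0$ is a Boolean combination of the atomic formulas $x_a=x_b$ and $E(x_a,x_b)$ on the free variables, so (2) asserts the same partial-isomorphism condition. Finally, Spoiler failing to win in $0$ moves means the initial pebbled configuration already induces isomorphic ordered subgraphs, which is again the same condition. Hence the three agree at $i=0$.

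For the inductive step I would isolate one lemma linking a single iteration of $k$-WL to one round of $BP_{k+1}$ and to one counting quantifier. The crucial observation is that the multiset equality $M_i(\vec{u})=M_i(\vec{v})$ in the update rule is, by a Hall's-theorem (perfect-matching) argument on the bipartite graph of equal-color pairs, equivalent to the existence of a single bijection $f:V(G)\to V(H)$ such that for every $w\in V(G)$ and every coordinate $j$ the tuples $\vec{u}[j,w]$ and $\vec{v}[j,f(w)]$ have equal $i$-iteration colors. That bijection is exactly a Duplicator response in one round, and a vertex $w$ violating some multiset entry is exactly the witness Spoiler would pick. Invoking the induction hypothesis on $(\vec{u}[j,w],\vec{v}[j,f(w)])$ turns ``equal $i$-colors'' into ``Spoiler cannot win in $i$ further moves'', which aligns (1) and (3) at level $i+1$. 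For (2), I would use that a depth-$(i+1)$ counting formula is a Boolean combination of formulas $\exists^{\ge m}x_j\,\varphi$ with $\varphi$ of depth $\le i$; agreement on all such formulas is equivalent to having, for every depth-$i$ type $\tau$, equal counts of vertices $w$ realizing $\tau$ over the pebbled tuple, which is precisely the same multiset condition.

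The hard part will be the bookkeeping that keeps the variable budget at exactly $k+1$: the $k$ tuple coordinates plus the single quantified variable $w$ re-used in the $j$-neighbor update must never force a $(k+2)$-nd variable, and one must verify that every depth-$i$ type over $\vec{u}$ is definable by one $C^{k+1}$ formula of depth $i$, so that ``same count for every type'' is both necessary and sufficient. A secondary subtlety is the round structure of the game: a round is Spoiler removing a pebble and then placing one under a fresh bijection, so I would need to argue the removal is not counted as a separate move and that re-using the freed pebble corresponds precisely to replacing coordinate $j$ in the WL update. Once these two alignments are pinned down, the equivalences close and the induction goes through.
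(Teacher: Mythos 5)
The paper does not prove this theorem at all---it is imported verbatim with the citation to \cite{kiefer2020power}---and your sketch is precisely the classical Cai--F\"urer--Immerman/Kiefer argument given in that source: induction on $i$ pivoting on iteration-$i$ types, conversion of the multiset condition $M_i(\vec{u})=M_i(\vec{v})$ into a single Duplicator bijection (note that equality of multisets of color-vectors yields the bijection directly, so Hall's theorem is unnecessary overhead), and definability of each depth-$i$ type by a single $C^{k+1}$ formula to handle the counting quantifiers. Your proposal is correct in outline and takes essentially the same route as the cited proof; the two bookkeeping points you flag (keeping the variable budget at $k+1$ by reusing the quantified variable, and folding the pebble removal into a single round so that placing the $(k+1)$-st pebble and removing pebble $j$ realizes the $j$-neighbor update $\vec{u}[j,w]$) are exactly the points the standard argument pins down.
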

\section{Local k-WL based Algorithms for GNNs}\label{various gnn}

This section presents the local $k-$WL based algorithms for GNNs. We also give runtime and space requirements for such GNNs.

\subsection{Local k-WL}
Given a graph $G$, we extract the subgraph induced on a $r-$hop neighbourhood around every vertex. We refer to it as $G_v^r,$ for the subgraph rooted at vertex $v$ in $G$. Then, we colour the vertices in $G_v^r$ according to their distances from $v$. Now, we run $k-$WL on the coloured subgraph $G_v^r$. The stable colour obtained after running $k-$WL is taken as the attributes of vertex $v$. Then, we run a GNN on the graph $G$ with the attributes on each vertex $v$. This is described in \ref{Local k-WL}.
\begin{algorithm}[ht]
\caption{Local k-WL}
\label{Local k-WL}
\begin{algorithmic}[1]
\State Input: $G,r,k$
\For{each vertex $v$ in $V(G)$} \label{alg:line2}
        \State Find the subgraph induced on the $r-$hop neighborhood rooted at vertex $v$ ($G_v^r$). 
        \State Color the vertices whose distance from $v$ is $i,$ by color $i$.
        \State Run $k-$WL on the colored graph until the colors stabilize.\label{alg1:line5}
    \EndFor
\State Each vertex has as an attribute the stable colouring of vertex $v$ obtained from $G_v^r$.

\State Run GNN on the graph $G$ with each vertex having attributes as computed above.
\end{algorithmic}
\end{algorithm}

\textbf{Runtime and Space requirement Analysis : }
The time required to run $k-$WL on $n$ vertices is $O(n^{k+1}\log(n))$. Here, we run $k-$WL on a $r-$hop neighbourhood instead of the entire graph. So, $n$ is replaced by $n_1,$ where $n_1$ is the size of the neighbourhood. If a graph has bounded degree $d$, and we run $k-$WL for a $2-$hop neighbourhood, then $n_1$ is $O(d^2)$. Also, we have to run Local $k-$WL for each vertex. Hence, the total time required is $O(n\cdot d^{2k+2}\log(d))$. Also, running a traditional GNN takes time $O((n+m)\log n),$ where $m$ is the number of edges. So, if we assume that $d$ is bounded, then the time required is linear in the size of the graph. Furthermore, the space required to run $k-$WL on $n$ vertices graph is $O(n^k)$. Hence, for Local $k-$WL, it follows that the space requirement is $O(n_1^k)$.
\subsection{Layer k-WL}
In order to make Local $k-$WL more time and space-efficient while maintaining the same expressive power, we propose a modification to Local $k-$WL. Instead of running $k-$WL on the entire $r-$hop neighbourhood, we run $k-$WL on consecutive layers of $G_{v}^{r}$ (i.e., run $k-$WL on the set of vertices with colour $i$ and colour $(i+1)$). Initially, we run $k-$WL on the set of vertices that are at a distance of $1$ and $2$ from $v.$ Then, we run $k-$WL on the set of vertices with colors $2$ and $3,$ and so on. While running $k-$WL, initially, it partitions the $k-$tuples based on the isomorphism type. However, we incorporate the stabilized colouring obtained in the previous round in this setting. For $l < k,$ we define the color of $l$ tuples as 
$col(u_1,u_2,...,u_l):=col(u_1,u_2,...,u_l,\underbrace {u_1,u_1..,,u_1}_{(k-l) \text{times}})$. Consider the mixed tuple (we call a tuple to be mixed if some of the vertices have been processed in the previous iteration and the remaining have not yet been processed) $(u_1,v_1,\ldots, u_k)$ where $col(u_j)=i$ and $col(v_j)=i+1$ (i.e $u_i'$s are the set of processed vertices and $v_i'$s are yet to be processed). So, even if $(u_1,v_1,\ldots , u_k)$ and $(u_1^{'},v_1^{'},\ldots , u_k^{'})$ may be isomorphic, if $col(u_1,u_2,\ldots u_l)\neq col(u_1^{'},u_2^{'},\ldots u_l^{'})$ then $col(u_1,v_1,\ldots , u_k) \neq col (u_1^{'},v_1^{'},\ldots , u_k^{'})$. The algorithm is described in \ref{Layer k-WL}. A GNN model incorporating Local+Layer $k-$WL is equivalent to running Layer $k-$WL in line \ref{alg1:line5} in \ref{Local k-WL}.

 \begin{algorithm}[ht]
\caption{Layer k-WL($v$)\label{Layer k-WL}}

\begin{algorithmic}[1]
\State Given $G_v^r,k$.
\State Run $k-$WL on the induced subgraph of levels $1$ and $2$.
\For{each layer $i$ of BFS($v$), $i\ge 2$}
    \State Initial colour of $k-tuple$ incorporates the stabilized colour obtained from the previous iteration.
    \State Run $k-$WL on the subgraph induced on the vertices in layer $i$ and $(i+1)$    
\EndFor
\end{algorithmic}
\end{algorithm}

\textbf{Runtime and Space requirement Analysis}.

The running time and space requirement for Layer $k-$WL depends on the maximum number of vertices in any two consecutive layers, say $n_2$. The time required to run $k-$WL is $O(r\cdot (n_2)^{k+1}\log(n_2))$. However, running only Local $k-$WL will require $O((r\cdot n_2)^{k+1}\log(r\cdot n_2))$ time. The space requirement is $O(n_2^k).$
Hence, running Layer $k-$WL is more efficient than running Local $k-$WL, especially when $r$ is large.

\subsection{Recursive WL}
Here, we present another variant of WL. The central idea is to decompose the graphs initially by running $1-$WL. Then, further, decompose the graphs by running $2-$WL and so on. One can note that the final vertex partition that $1-$WL outputs after colour refinement is regular if we restrict it to a single colour class. In other words, let $G[X]$ be the induced graph on the vertices of the same colour. Then, $G[X]$ is regular. Also, $G[X,Y]$ where $X$ and $Y$ are sets of vertices of two different colour classes. $G[X,Y]$ is also a bi-regular graph. We run $2-$WL on the regular graph. Using \citep{bollobas1982distinguishing}, we can guarantee that it would distinguish almost all regular graphs. Similarly, running $2-$WL on $G[X,Y]$ is bi-regular and thus can be distinguished by $2-$WL. %We also run $1-$WL later, to pass the final message to call the vertices. 
We again run $1-$WL on $G,$ using the colours obtained after running $2-$WL. This further refines the colours of the vertices in $G$. One can easily check that it is more expressive than $1-$WL and less expressive than $2-$WL. We give the graph \ref{Graph identifiable by 2-WL but not by Recursive $1-$WL} that can not be distinguished by Recursive $(1,2)-$WL and $1-$WL but can be distinguished by $2-$WL. This gives an intermediate hierarchy in the $k-$WL hierarchy. Also, the space and time required for running $2-$WL on the entire graph is more than that of Recursive $(1,2)-$WL. The running time and space required depend on the partition size obtained after running $1-$WL. 

Note that the colour of vertex $v$ is $col(v,v)$ after running $2-$WL. 
 \begin{algorithm}[ht]
\caption{Recursive(1,2) WL\label{algo:recursive}}
\label{alg:recursive}
\begin{algorithmic}[1]
\State Given $G$
\State Run $1-$WL and get the partition of vertices into colour classes.
\State Let $S=\{C_1,C_2,\ldots C_l\}$ be the color classes obtained after running $1-$WL.
\For{each color class $C_i$ in $S$} \label{algrec:line2}
        \State Run $2-$WL on the induced subgraph in $C_i$ and get color partition.
        \State Let $C_i$ get partitioned into $C_{i,1}, C_{i,2},\ldots,C_{i,l}$
    \EndFor
\State Run $1-$WL on the colored graph $G$ whose colors are given by steps 5 and 6.     
\For{each new color class $C'_{i}$ and $C'_{j}$}
    \State Run $2-$WL on the induced subgraph on the vertices in color partitions $C'_i$ and $C'_j$ and get new color partition.
\EndFor
    
\State Repeat 5-11 till the colour stabilizes.
\end{algorithmic}
\end{algorithm}

This idea can be generalized for any suitable $k.$ We can run a smaller dimensional $k_1-$WL and then use the partition of $k_1$ tuple vertices. Later, we can use this partition to get a finer partition of $k_2$ tuples. Assuming $k_1< k_2$, one can see that we have to run $k_2-$WL on smaller graphs. This reduces the time and space required for running $k_2-$WL on the entire graph. One can easily see that it is less expressive than $k_2-$WL; however, it is more expressive than $k_1-$WL. 

More specifically, initially run $1-$WL and further run $(k-1)-$WL on the coloured classes. One can check that it is more expressive than $(k-1)-$WL and less expressive than $k-$WL. 

\begin{figure}[ht]
    \centering
    \captionsetup{font=footnotesize}
    \includegraphics[scale=0.5]{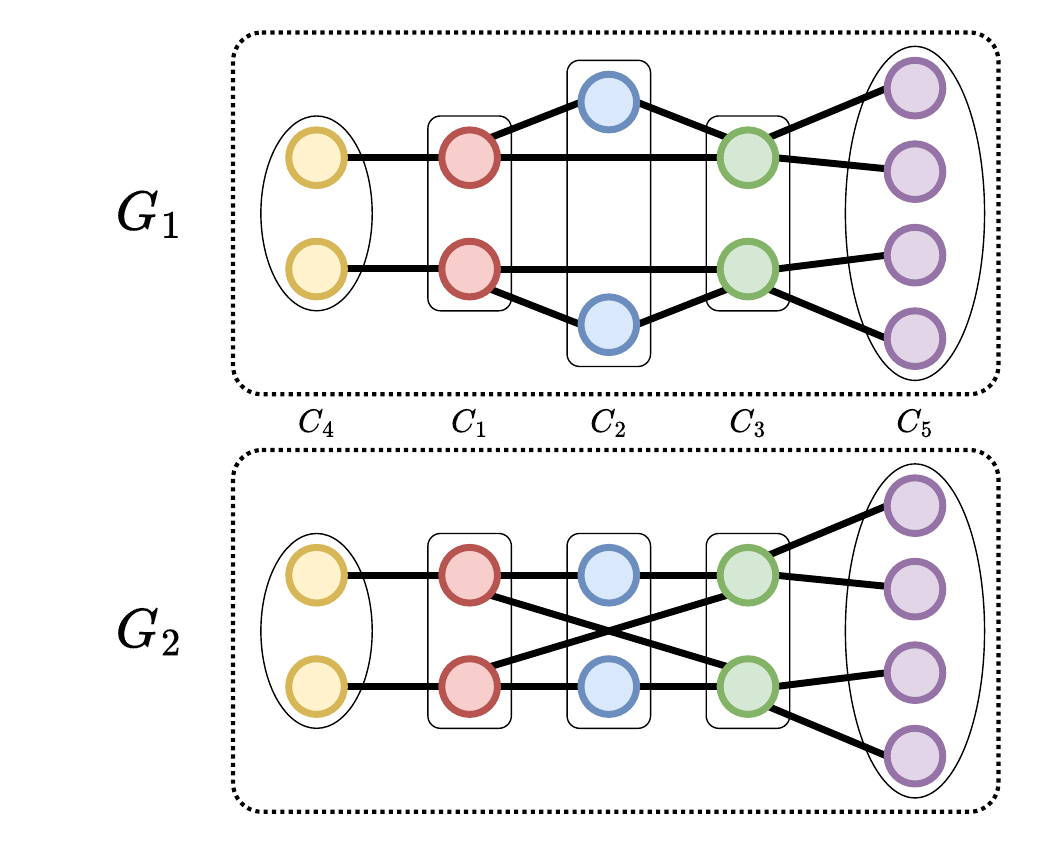}
    \caption{\small Graphs identifiable by 2-WL but not by Recursive $1-$WL}
    \label{Graph identifiable by 2-WL but not by Recursive $1-$WL} 
\end{figure}
\section{Theoretical Guarantee of Expressive Power}\label{exp_power_gnn}
In this section, we theoretically prove the expressive power of GNN models that we proposed in \ref{various gnn} in terms of graph and subgraph isomorphism. In the discussion below, we say that a GNN model $A$ is at most as expressive as a GNN model $B$ if any pair of non-isomorphic graphs $G$ and $H$ that can be distinguished by $A$ can also be distinguished by $B$. Also, we say a GNN model $A$ is at least as expressive as a GNN model $B$ if $A$ can identify all the non-isomorphic graph pairs that can be identified by GNN model $B$. The proof of the theorem and lemmas presented in the section mainly use pebble bijective game. Also, as mentioned earlier, there is equivalence between the expressivity of $k-$WL and $(k+1)-$pebble bijective game.
\subsection{Local k-WL}
It has been shown in recent works that running Local $1-$WL has more expressive power as compared to running $1-$WL. The upper bound on the expressive power of Local $1-$WL has been shown in \citep{frasca}. However, the expressive power of Local $k-$WL, for arbitrary $k,$ has not been studied. In the Theorem \ref{local_wl_power}, we give a description of the expressive power of Local $k-$WL and show that it has more expressive power than $k-$WL. We also show that it is less expressive than $(k+1)-$WL. The proof techniques that we used are different from \citep{frasca}.
\begin{restatable}{theorem}{localwl}
    \label{local_wl_power}
 Running Local $k-$WL is more expressive than running $k-$WL on the entire graph. Also, running Local $k-$WL is at most as expressive as running $(k+1)-$WL on the entire graph.

\end{restatable}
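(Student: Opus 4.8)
My plan is to prove the two bounds separately, in both cases translating statements about the stable colourings into the bijective pebble game and invoking \Cref{pebble_wl_relation} and \Cref{Iteration_number_logic}. The guiding intuition is that running $k$-WL on the distance-coloured ball $G_v^r$ is the same as running $k$-WL on $G$ after individualising the single root $v$, and that ranging over all choices of root $v$ is exactly the freedom granted by the extra pebble that separates the $(k+1)$-pebble game (i.e.\ $k$-WL) from the $(k+2)$-pebble game (i.e.\ $(k+1)$-WL).

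For the lower bound (Local $k$-WL is at least as expressive as $k$-WL) I take $r$ at least the diameter, so that for every root $v$ the ball $G_v^r$ is all of $G$ equipped only with the extra colouring by distance from $v$. Suppose Local $k$-WL cannot separate $G$ and $H$; then the attribute multisets coincide, so there is a root pair $(v,w)$ with $G_v^r \simeq_k H_w^r$ as distance-coloured graphs. Since these coloured graphs are $G$ and $H$ themselves, Duplicator wins $BP_{k+1}$ on the coloured instances, and the very same sequence of bijections wins the uncoloured game (a colour-preserving isomorphism of pebbled subgraphs is in particular an uncoloured one), so $G \simeq_k H$ by \Cref{pebble_wl_relation}. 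Strictness is then witnessed by an explicit pair: for $k=1$, $C_6$ and $2C_3$ are both $2$-regular and hence $1$-WL equivalent, yet at $r=1$ the ball around a vertex is a path $P_3$ in $C_6$ and a triangle $K_3$ in $2C_3$, which $1$-WL tells apart by the degree of the distance-$1$ vertices. For general $k$ I would lift this idea through a CFI-style gadget to produce a $k$-WL equivalent pair whose rooted balls differ after individualisation.

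For the upper bound (Local $k$-WL is at most as expressive as $(k+1)$-WL) I assume $G \simeq_{k+1} H$ and show the attribute multisets agree. By \Cref{pebble_wl_relation} Duplicator wins $BP_{k+2}(G,H)$; for any root $v$, if Spoiler opens by pinning a pebble on $v$ and Duplicator answers with a bijection $f$, setting $w=f(v)$, the residual game is precisely $BP_{k+1}$ on $G$ and $H$ with $v,w$ individualised, so $(G,v) \simeq_k (H,w)$. Two further observations make this match the Local $k$-WL attribute. First, since $k\ge 1$, $(k+1)$-WL is at least as strong as $2$-WL, which determines all pairwise distances, so the colouring of $G$ by distance from $v$ is recovered and is compatible with the pinned pebble. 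Second, the distance colouring flags exactly the boundary of the ball, and I will argue that the $k$-WL colouring of the induced subgraph $G_v^r$ is no finer than the $k$-WL colouring of the whole individualised, distance-coloured graph, so $(G,v)\simeq_k (H,w)$ forces $G_v^r \simeq_k H_w^r$. Finally, because the histogram of $(k+1)$-WL vertex colours of $G$ and $H$ coincides, pairing each $v$ with a $w$ of equal colour yields equal local attributes with equal multiplicities, i.e.\ $G$ and $H$ are Local $k$-WL equivalent.

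The main obstacle is the subgraph-restriction step in the upper bound: Local $k$-WL runs on the truncated ball $G_v^r$, not on the whole graph, so I must show that discarding the external neighbours of the boundary vertices (which the distance colouring marks) cannot make the subgraph colouring strictly finer than the individualised whole-graph colouring. Concretely, I would convert Duplicator's winning strategy on the individualised, distance-coloured graphs into one on the induced balls, using that the distance colouring forces any winning bijection to map the $r$-ball of $G$ onto the $r$-ball of $H$ and that adjacency inside a ball is the induced adjacency. Getting this reduction exactly right---together with the translation between equality of $(k+1)$-WL colour histograms and the existence of a multiplicity-preserving root matching---is the technical heart of the argument, and the distance-colouring recoverability for $k\ge 1$ is the lemma that glues the two halves together.
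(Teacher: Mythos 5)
Your proposal follows essentially the same route as the paper's proof: both directions are handled by strategy transfer in the bijective pebble game, with Local $k$-WL identified with $k$-WL after individualising the root vertex, the upper bound obtained by spending the extra pebble of the $(k+2)$-pebble game to pin the root pair before copying the local Spoiler's strategy, and strictness witnessed by a triangle-count example (your $C_6$ versus $2C_3$ is a concrete instance of the paper's triangle argument for $k=1$). The two points you flag as technical debts---the restriction from the individualised whole graph to the truncated ball $G_v^r$, and strictness for general $k$ via CFI-style gadgets---are precisely the steps the paper's own proof also glosses over, so your plan neither diverges from nor falls short of the published argument.
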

%\end{theorem}
%We give the proof of the above theorem in \ref{Proof:local-k-wl power}.
\begin{proof}\label{Proof:local-k-wl power}
Let $G_1$ and $G_2$ be two graphs distinguishable by $k-$WL. So, the Spoiler has a winning strategy in the game ($G_1$,$G_2$). Suppose $G_1$ and $G_2$ do not get distinguished after running $k-$WL locally. That means the Duplicator has a winning strategy for all vertices individualized. Let $v$ in $G_1$ and $u$ in $G_2$ be the vertices that are individualized.

We play the $(k+1)-$bijective pebble game on the entire graphs $(G_1,G_2)$ and the local subgraphs $(G_1^v, G_2^u)$ simultaneously. Let $S_1$ and $D_1$ be the spoiler and duplicator in game $(G_1,G_2)$ respectively, and $S_2$ and $D_2$ be the spoiler and duplicator in game $(G_1^v,G_2^u)$. We use the strategy of $D_2$ to determine the move for $D_1$ and the strategy of $S_1$ to determine the move for $S_2$.

Initially, $D_2$ gives a bijection $f$ from the vertex set of $G_1^v$ to $G_2^u$. We propose the same bijection $f$ by $D_1,$ extending it by mapping $v$ to $u$. Now, the spoiler $S_1$ places a pebble at some vertex $(v_i,f(v_i)).$ The spoiler $S_2$ also places a pebble at vertex $(v_i,f(v_i))$. We can show using induction on the number of rounds that if $S_1$ wins the game, then $S_2$ also wins the game. Our induction hypothesis is that $S_1$ has not won till the $j^{th}$ round, and the positions of both games are the same. Let the current position of both the games after the $j^{th}$ round be $((v_1,v_2,\ldots,v_l),(f(v_1),f(v_2),\ldots,f(v_l))$. Now, $S_1$ either decides to play a pair of pebbles or remove. 

\textit{Case(1): If $S_1$ decides to remove a pebble.}
In this case, the Duplicator $D_1$ has done nothing to do. $S_2$ will copy the same strategy as $S_1$. Here, $S_1$ cannot win in this round. Also, note that the positions of both games are the same. 

\textit{Case(2): If $S_1$ decides to play a pebble.} In this case, $S_2$ also decides to play a pebble. Duplicator $D_2$ proposes a bijective function $f.$ The same bijective function is proposed by $D_1$. Now, $S_1$ places a pebble at $(v,f(v))$. $S_2$ also chooses the same vertices. So, the position of both the games is the same. Therefore, if $S_1$ wins the game, then $S_2$ also wins the game. Thus, running $k-$WL locally is at least as expressive as running $k-$WL on the entire graph.

We can show that it is more expressive by looking at the simple example that running $1-$WL on a local substructure can count the number of triangles, whereas running $1-$WL on an entire graph does not recognize graphs having different triangle counts. Also, one can observe that running $k-$WL locally is running $k-$WL on the coloured graph where vertices at distinct distances get distinct colours. Its power is the same as individualizing one vertex and running $k-$WL. Thus, running $k-$WL locally is more expressive than running $k-$WL on the entire graph.

Let $G_1$ and $G_2$ be two graphs that can be distinguished by running $k-$WL locally. Recall that the key vertices refer to $u$ and $v$ in $G_1$ and $G_2$ such that they are the root vertices corresponding to $G_1$ and $G_2$, respectively. This means the Spoiler has a winning strategy in the $(k+1)$ bijective pebble game, where the key vertices are matched to each other. Now, we use the strategy of the Spoiler in the local substructure to get a winning strategy for the Spoiler in the entire graph. At first, when the Duplicator gives a bijective function, the Spoiler places a pebble on the paired vertices. For the remaining moves, we copy the strategy of the Spoiler in the local structure, and the Duplicator's strategy of the entire graph is copied to the Duplicator's strategy of the local structures. Thus, if the Spoiler has a winning strategy in the local substructure, then the Spoiler wins the $(k+2)-$ bijective pebble game on entire graphs.
\end{proof}

\subsection{Layer k-WL}
We presented an algorithm (Algorithm \ref{Layer k-WL}) for applying $k-$WL to consecutive layers in a $r-$hop subgraph for a vertex $v \in V.$ This improves the time and space efficiency of the Local $k-$WL method as we have discussed above. We now describe the expressive power of Layer $k-$WL. In the following lemmas, we show that the expressive power of Layer $k-$WL is the same as that of Local $k-$WL.

\begin{restatable}{lemma}{lone}
    \label{layer-k-wl atleast}
    Running $k-$WL on the entire $r-$hop neighbourhood is at least as expressive as running Layer $k-$WL. 
\end{restatable}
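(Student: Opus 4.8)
The plan is to show that the stable coloring produced by Local $k$-WL on the colored subgraph $G_v^r$ (vertices colored by distance from the root) is at least as fine as the coloring produced by Layer $k$-WL; since a finer coloring is exactly one that distinguishes at least as many tuple pairs, this yields the claimed comparison of expressive power. First I would fix the root $v$ and write $L_i$ for the $i$-th BFS layer of $G_v^r$, so that the $i$-th round of \Cref{Layer k-WL} runs $k$-WL on the induced subgraph $G_v^r[L_i \cup L_{i+1}]$, initialized with the stable colors carried over from round $i-1$ on $L_i$. The whole statement then reduces to the single assertion that, at every round, the global coloring computed by Local $k$-WL refines the coloring that round outputs; I would prove this by induction on $i$.

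Two ingredients drive the induction. The first is monotonicity of $k$-WL: if an initial coloring $c$ refines $c'$, then the stable coloring obtained from $c$ refines the one obtained from $c'$, since a refinement is never undone. The second, which is the heart of the argument, is a \emph{definable-subgraph lift}: because $L_i \cup L_{i+1}$ is exactly a union of distance-color classes, the set $S := L_i \cup L_{i+1}$ is $k$-WL-definable, and I claim that running $k$-WL on the full colored graph $G_v^r$ is at least as fine on tuples inside $S$ as running it on $G_v^r[S]$. I would establish this through the bijective $(k+1)$-pebble game (\Cref{pebble_wl_relation}): given a Spoiler strategy witnessing a distinction on $G_v^r[S]$, the Spoiler in the game on the full $G_v^r$ simply confines all its pebbles to $S$ and copies that strategy. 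This is legal because (a) any color-respecting bijection the Duplicator proposes on $G_v^r$ must map each distance class to itself, hence maps $S$ onto $S$ and restricts to a valid bijection of $G_v^r[S]$; and (b) since $G_v^r[S]$ is induced, the isomorphism-type test on the pebbled tuples returns the same verdict in both games. Thus any distinction visible to $k$-WL on $G_v^r[S]$ is also visible to $k$-WL on $G_v^r$.

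With these in hand the induction goes through. For the base case ($i=1$, layers $1$ and $2$) the carried color is trivial, so the definable-subgraph lift applied to $S = L_1 \cup L_2$ gives the claim directly. For the inductive step, round $i$ starts from an initialization on $L_i$ that incorporates the round-$(i-1)$ stable coloring (the padding convention $col(u_1,\dots,u_l) := col(u_1,\dots,u_l,u_1,\dots,u_1)$ for short tuples in \Cref{Layer k-WL} only refines, so it is harmless for this direction); by the induction hypothesis this initialization is itself refined by the Local global coloring, so monotonicity together with the definable-subgraph lift shows that the round-$i$ output is again refined by Local $k$-WL. As Layer $k$-WL's final coloring is assembled from these per-round colorings, Local $k$-WL refines it, which is exactly the asserted expressiveness comparison.

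I expect the definable-subgraph lift to be the main obstacle: one must verify carefully that the Duplicator's global bijections genuinely restrict to color-preserving bijections of $S$ (this is where the distance coloring is essential) and that the carried-over \emph{mixed tuple} colors of \Cref{Layer k-WL} are subsumed by, rather than incomparable to, the global refinement, i.e. that the sequential layerwise initialization only ever coarsens relative to the single global run of $k$-WL on $G_v^r$.
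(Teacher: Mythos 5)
Your central mechanism is the same one the paper uses: a Spoiler on the full distance-colored $G_v^r$ copies a winning Spoiler from the layer-pair game, which is legal because any bijection the Duplicator can survive with must preserve the distance colors, hence maps $S=L_i\cup L_{i+1}$ onto itself and restricts to a valid bijection of the induced subgraph. The paper's proof runs exactly this copying argument (there phrased as: $D$'s strategy restricts to give $D_i$'s, and $S_i$'s moves are mirrored by $S$) once per layer-pair game and concludes directly. Where you differ is in adding an explicit induction over rounds to handle the colors carried from round $i-1$ into round $i$ --- a point the paper's proof quietly elides --- and it is precisely there that your argument has a genuine gap.

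In your inductive step, ``monotonicity together with the definable-subgraph lift'' does not compose to give the conclusion. Monotonicity compares two runs on the \emph{same} graph $G_v^r[S]$: since the carried initialization $c_i$ refines the plain distance initialization, the round-$i$ output refines the stable coloring from the distance initialization. Your lift compares two \emph{graphs} under the \emph{same} (distance) initialization: the global Local $k$-WL coloring $A$ also refines the stable coloring from the distance initialization on $G_v^r[S]$. So both $A$ and the round-$i$ output refine one common coarse coloring --- two arrows pointing into the same object, yielding no comparison between $A$ and the round-$i$ output, which is what the step claims. To close it you need the lift \emph{relative to the carried initialization}. One way: when the copied subgame Spoiler wins via a mismatch of carried tuple colors, the full-graph Spoiler switches to a strategy witnessing the corresponding global distinction; such a strategy exists because your induction hypothesis gives that $A$ refines $c_i$ (so a carried-color mismatch forces an $A$-color mismatch), and by the game characterization of stable colors (\Cref{pebble_wl_relation}, \Cref{Iteration_number_logic}) distinct stable colors hand the Spoiler a win from that configuration. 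Alternatively, argue in coloring language: $A$ restricted to $S$-tuples is stable under the layer-pair refinement operator (tuples with equal $A$-color have equal neighbor multisets even after filtering to substituted vertices $w\in S$, since a tuple's color determines the distance colors of its entries) and refines $c_i$; since the WL-stable coloring from $c_i$ is refined by every stable coloring refining $c_i$ (standard induction over iterations), $A$ refines the round-$i$ output. Either patch makes your induction go through; as written, the gluing step fails.
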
 
%We give the proof of the above Lemma in \ref{Proof: layer k-WL atleast}.

\begin{proof}
    Let $G$ and $H$ be the subgraphs induced on the $r-$hop neighbourhood. Let $(S, D)$ be the Spoiler-Duplicator pair for the game $(G, H)$. Similarly, let $(S_i,D_i)$ be the Spoiler-Duplicator pair for the game $(G_i,H_i),$ where $G_i$ and $H_i$ are the subgraphs induced on the vertices at the $i$th and $(i+1)$th layers of $G$ and $H,$ respectively. We claim that if any of the $S_i'$s has a winning strategy in the game $(G_i, H_i)$, then $S$ has a winning strategy in the game $(G, H)$. Here, the strategy of $D$ is copied by $D_i$, and the strategy of $S_i$ is copied by $S$. We prove this using induction on the number of rounds of the game. Our induction hypothesis is that the positions of both the games are the same, and if $S_i$ wins after $t$ rounds, then $S$ also wins after $t$ rounds. 

    \textit{Base case:} $D$ proposes a bijective function $f:V(G)\xrightarrow{} V(H)$. Note that the bijection must be colour-preserving; otherwise, $S$ wins in the first round. Thus, we can assume that $f$ is color-preserving. So, $D_i$ proposes the restricted function $f_i$ as a bijective function from $V(G_i)$ to $V(H_i)$. Now, $S_i$ plays a pair of pebbles in $(G_i, H_i)$, and $S$ also plays the same pair of pebbles in the game $(G, H)$. It is easy to see that the positions of the two games are the same. Also, if $S_i$ wins, then the number of vertices of a particular colour is different. Hence, $S$ also has a winning strategy. 

    By the induction hypothesis, assume that after the $t^{th}$ round $S_i$ did not win and the position of the game is the same for both games. 

    Consider the $(t+1)^{th}$ round in both games. $S_i$ either chooses to play or remove a pebble. If $S_i$ chooses to remove a pebble, so will $S$. Again, the position of both the games is the same. If $S_i$ decides to play a pair of pebbles, then $S$ also decides to play a pair of pebbles. So, $D$ proposes a bijective function, and $D_i$ proposes a restricted bijective function. Now, if $S_i$ plays a pair of pebbles at $(v,f_i(v))$, then $S$ also decides to play a pair of pebbles at $(v,f(v))$. Thus, the position of the game is the same in both of the games. This ensures that if $S_i$ has won, then $S$ also wins.
    
\end{proof}
\begin{restatable}{lemma}{ltwo}
    \label{layer-k-wl atmost}
    Running Layer $k-$WL is as expressive as running $k-$WL on the entire induced subgraph. 
\end{restatable}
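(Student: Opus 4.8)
The plan is to establish the direction complementary to Lemma~\ref{layer-k-wl atleast}: every pair of $r$-hop induced subgraphs that $k$-WL distinguishes on the whole neighbourhood is also distinguished by Layer $k$-WL. Combined with Lemma~\ref{layer-k-wl atleast}, this yields equal expressive power. As in the previous proof, I would argue through the bijective $(k+1)$-pebble game and the equivalence of Theorem~\ref{pebble_wl_relation}. Let $G$ and $H$ be the two $r$-hop subgraphs colored by distance from their key vertices, let $(S,D)$ be the Spoiler--Duplicator pair for the entire game $(G,H)$, and let $(S_i,D_i)$ be the pair for the $i$-th layer game $(G_i,H_i)$, where $G_i,H_i$ are induced on layers $i$ and $i+1$. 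The goal is to convert a winning strategy for $S$ into a winning strategy for one of the $S_i$.

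For the simulation I would exploit that any legal Duplicator bijection must preserve the distance coloring, so throughout both games every pebbled pair lies in matching layers. I would therefore let the layer adversary $D_i$ publish a bijection $f_i$ on layers $i,i+1$ and assemble it into a global color-preserving $f$ for $D$ by fixing an arbitrary color-preserving extension on the remaining layers; Spoiler $S_i$ then copies exactly those moves of $S$ that fall inside layers $i$ and $i+1$. The crucial device is the initial coloring of mixed $k$-tuples in the $(i,i+1)$-round, which by construction incorporates the stabilized coloring carried from the $(i-1,i)$-round: this is what lets the layer game inherit the effect of pebbles already resolved on inner layers. I would then run an induction on the number of rounds showing that, as long as $S$ has not yet won, the positions of the two games agree on layers $i,i+1$ together with their carried colors, so the round in which $S$ exposes a non-isomorphic configuration is witnessed by a color discrepancy available to $S_i$.

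The one genuinely delicate point, and the step I expect to demand the most care, is that $S$ may keep pebbles simultaneously on several non-consecutive layers, a configuration no single layer game can represent directly. Here I would lean on the BFS layering property: in $G_v^r$ no edge joins two layers whose indices differ by more than one, so the induced isomorphism type of any $k$-tuple is determined by the induced types of its restrictions to consecutive-layer pairs together with the within-layer adjacencies recorded in the carried colors. Accordingly I would prove, by an outward induction on the layer index, that the stabilized coloring produced after the $(i-1,i)$-round refines the coloring that entire $k$-WL assigns to tuples confined to layers $\le i$, the inductive step feeding the refined colors of layer $i$ into the $(i,i+1)$-round through the mixed-tuple rule so that refinement propagates losslessly from the center outward. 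The heart of the argument is verifying this ``no information loss across the layer boundary'' claim: one must check that the color a vertex of layer $i+1$ accumulates in the $(i,i+1)$-round, given the carried colors on layer $i$, already captures everything entire $k$-WL would learn about that vertex's interaction with the inner layers, and it is precisely the absence of layer-skipping edges that should make this hold.
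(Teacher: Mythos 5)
Your overall route coincides with the paper's: both proofs work through the bijective $(k+1)$-pebble game, both use the fact that any legal Duplicator bijection must preserve the distance coloring, and both ultimately rest on the BFS property that $G_v^r$ has no layer-skipping edges, so that when the full-game Spoiler $S$ finally exposes an adjacency discrepancy between pebbled vertices $u$ and $v$, those vertices (by color preservation, in both graphs) necessarily lie in a single consecutive-layer pair and the discrepancy is visible in some layer game. The paper, however, organizes the simulation in the opposite direction from yours: it runs the full game and \emph{all} layer games in parallel, has each layer Duplicator $D_i$ propose the \emph{restriction} $f_i$ of the full-game bijection $f$ to $V(G_i)$, mirrors each Spoiler move on a vertex of color $i$ into games $i-1$ and $i$, and only at the end identifies the winning index as whichever layer pair hosts the discrepancy. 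Notably, the paper's proof never invokes the carried mixed-tuple colors of Algorithm~\ref{Layer k-WL} at all.

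This difference is where your proposal has a concrete problem. You fix a single layer game $i$, take its adversary's bijection $f_i$, and assemble a global $f$ by an \emph{arbitrary} color-preserving extension on the remaining layers. But $S$'s winning strategy is free to pebble outside layers $i$ and $i+1$; if its win materializes against the arbitrary extension, it yields nothing for $S_i$ in game $i$, and nothing for any other $S_j$ either, since no winning strategy of $D_j$ was in play on those layers. The quantifier structure forces one of two repairs: (a) the paper's parallel simulation, where the witnessing layer is selected \emph{after} seeing where the discrepancy falls, so no index need be fixed in advance; or (b) a contrapositive argument gluing the winning bijections of all the $D_i$'s into one global $f$ --- which immediately hits the consistency problem on the shared layer $i+1$, precisely where your ``no information loss across the layer boundary'' induction would have to do the real work. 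That induction is a legitimate alternative engine (and in fact more faithful to the algorithm than the paper's proof, which sidesteps the carried colors entirely via the locality of the win condition), but as proposed it is the unproven heart of your argument, and it is also slightly misstated: no layer round ever colors a tuple spanning non-consecutive layers, so the refinement claim can only be formulated for tuples confined to a single consecutive pair, with inner-layer information entering solely through the carried colors.
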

\begin{proof}
    Let $G$ and $H$ be the subgraphs induced on a $r-$hop neighbourhood. Let $(S,D)$ be the Spoiler-Duplicator pair for the game $(G,H)$. Similarly, let $(S_i,D_i)$ be the Spoiler-Duplicator pair for the game $(G_i,H_i),$ where $G_i$ and $H_i$ are the subgraphs induced on the vertices at the $i$th and $(i+1)$th layers of $G$ and $H,$ respectively. We claim that if $S$ has a winning strategy in the game $(G,H),$ then there exists $S_i$ such that it has a winning strategy in the game $(G_i,H_i).$ Here, the strategy of $D$ is copied by $D_i$ and the strategy of $S_i$ is copied by $S.$ We prove the lemma using induction on the number of rounds of the game. Our induction hypothesis is that the position of the game $(G,H)$ is the same for $(G_i,H_i),$ for all $i,$ if we restrict it to the subgraph induced by the vertices of color $i$ and $(i+1).$ Also, if $S$ wins after round $t,$ then there exists $S_i$ that wins after $t$ rounds. 

    \textit{Base case:} $D$ proposes a bijective function $f:V(G)\longrightarrow V(H)$. Note that the bijection must be colour-preserving; otherwise, $S$ wins in the first round. Thus, we can assume that $f$ is color-preserving. So, $D_i$ proposes the restricted function $f_i$ as a bijective function from $V(G_i)$ to $V(H_i), \forall i\in [r]$. Now, $S$ will play a pair of pebbles in the game $(G,H).$ Suppose $S$ plays the pebbles at $(v,f(v))$ and $color(v)=i,$ then $S_i$ and $S_{i-1}$ play pebbles at $(v,f_i(v))$ in their first round. It is easy to see that the position of the games $(G,H)$ and $(G_i,H_i),$ for all $i\in[r],$ is the same if we restrict it to the subgraph induced by vertices of colours $i$ and $(i+1)$. Also, if $S$ wins, then the number of vertices of a particular colour is not the same. So, there exists some $i,$ such that $S_i$ also has a winning strategy. 

    By induction hypothesis, assume that after the $t^{th}$ round, $S$ did not win, and the position of the game is the same as defined. 

    Consider the $(t+1)^{th}$ round in both the games. $S$ either chooses to play or remove a pebble. If $S$ chooses to remove a pebble from vertex $(v,f(v)),$ then, if $v$ is colored with color $i,$ then $S_i$ and $S_{i-1}$ will remove a pebble from vertex $(v,f_i(v))$. Again, the position of both the games is the same. Now, if $S$ decides to play a pair of pebbles, then each $S_i$ also decides to play a pair of pebbles. So, $D$ propose a bijective function, and $D_i$ proposes a restricted bijective function. Now, suppose $S$ plays a pair of pebbles at $(v_1,f(v_1))$. If $color(v_1)=i,$ then $S_i$ and $S_{i-1}$ also decides to play pebbles at $(v_1,f_i(v_1))$. Thus, the position of the game is the same as defined. Now, if $S$ wins, then there exists $u$ and $v$ such that either $(u,v)\in E(G)$ and $(f(u),f(v))\notin E(H)$ or $(u,v)\notin E(G)$ and $(f(u),f(v))\in E(H)$. Similarly, there exists $S_i$ for which these things happen as the position induced is the same. Therefore, $S_i$ wins for some $i.$
\end{proof}

Thus, from the above two lemmas, we can say that the expressive power of Layer $k-$WL is the same as local $k-$WL.

\section{Subgraph Counting Algorithms and Characterization of Patterns}\label{subgraph_counting_sec}
Here, we characterize the expressive power of the proposed methods in terms of subgraph as well as induced subgraph counting. In this section, we provide algorithms and characterization of subgraphs that can exactly count the number of patterns appearing as subgraphs or induced subgraphs. As described above, we can see that the running time is dependent on the size of the local substructure and the value of $k$. The size of the subgraph is dependent on the radius of the patterns. So, we have to take a $r-$hop neighbourhood for each vertex $v$ in the host graph $G$.

In \ref{k_based on local substructue}, we show how the value of $k$ can be decided based on the local substructure of the host graph. It is independent of the structure of the pattern. Also, it gives an upper bound on the value of $k$ that can count patterns appearing as subgraphs and induced subgraphs. In \ref{subsection_induced subgraph}, we first show that doing local count is sufficient for induced subgraph count, and later, we give the upper bound on $k$ based on the pattern size. Note that the value of $k$ for induced subgraph count is based only on the size of the pattern, not its structure. In \ref{subsection_subgrpah count}, we again show that locally counting subgraph is sufficient. Also, we explore the value of $k$ based on the structure of the pattern. For subgraph counting, the structure of the pattern can be explored to get a better upper bound for the value of $k$. Later, for the sake of completeness, we give an algorithm to count triangles, the pattern of radius one and $r.$

\subsection{Deciding k based on local substructure of host graph} \label{k_based on local substructue}
Here, we explore the local substructure of the host graph in which we are counting patterns appearing as graphs and subgraphs. For a given pattern of radius $r$, we explore the $r-$hop neighbourhood around every vertex $v$ in the host graph $G$. If two graphs $G_1$ and $G_2$ are isomorphic, then the number of subgraphs and induced subgraphs of both the graphs are the same. We use the same idea to count the number of subgraphs.

\cite{cai1992optimal}, shown that $\Omega(n)$ dimension is needed to guarantee graph isomorphism. However, for restricted graph classes, we can still guarantee isomorphism for small dimensions. It has been shown that ${3-WL}$ is sufficient for planar graph \citep{kiefer2019weisfeiler}, ${k-WL}$ for graphs with treewidth at most $k$ \citep{decompose_graphs}, $(3k+4)-WL$ for graphs with rankwidth at most $k$ \citep{rankwidth_WL}, and $(4k+3)$ for graphs with Euler genus at most $k$ \citep{grohe2019linear}. We say these graph classes as \emph{good} graph classes. Note that, for non-isomorphic graphs, the graphs is not $k-$WL equivalent. Thus, running corresponding $k-$WL can count the pattern of radius $r$, appearing as a subgraph and induced subgraph.
\begin{restatable}{theorem}{thtwo}
    \label{isomorphism}
    Let $G_v^r$ denote the $r-$hop neighborhood around $v$. Given a pattern of radius $r,$ the values of $k$ that are sufficient to guarantee the count of patterns appearing either as subgraphs or induced subgraphs are:
    \begin{enumerate}
        \item $(3-WL)$ if $G_v^r$ planar
        \item $(k-WL)$ if $tw\footnote{tw denotes treewidth}(G_v^r)\leq k$
        \item $((3k+4)-WL)$ if $rankwidth(G_v^r)\leq k$
        \item $((4k+3)-WL)$ if $Euler-genus(G_v^r)\leq k$
    \end{enumerate}
\end{restatable}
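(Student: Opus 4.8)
The plan is to reduce the global pattern-counting problem to a family of \emph{local} isomorphism questions, and then feed each into the corresponding known $k$-WL identification result for the four ``good'' graph classes quoted above.

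First I would localize the count using the radius bound. Fix a pattern $H$ of radius $r$ and designate a center of $H$ as its key vertex $c$. Since every vertex of $H$ lies within distance $r$ of $c$, any occurrence of $H$ in the host graph $G$ whose copy of $c$ sits at a vertex $v$ is contained entirely in the induced $r$-hop neighborhood $G_v^r$. Consequently, the number of occurrences of $H$ in $G$, whether as a subgraph or as an induced subgraph, equals a fixed multiple (depending only on $H$, through the orbit of $c$ under $\mathrm{Aut}(H)$) of $\sum_{v \in V(G)} a_H(v)$, where $a_H(v)$ is the number of rooted copies of $H$ inside $G_v^r$ with $c$ pinned to $v$. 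It therefore suffices to show that each rooted local count $a_H(v)$ is determined by the output of $k$-WL on $G_v^r$.

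Next I would pass from identification to count invariance. The elementary fact (already noted before the statement) is that isomorphic graphs have identical subgraph and induced-subgraph counts. Because Local $k$-WL colors the vertices of $G_v^r$ by their distance to $v$, the root is the unique color-$0$ vertex, so any color-preserving isomorphism of two such neighborhoods is automatically root-preserving; hence isomorphic colored neighborhoods have equal rooted counts $a_H(\cdot)$. Now if $k$ is large enough that $k$-WL \emph{identifies} $G_v^r$ --- i.e. no non-isomorphic colored graph is $k$-WL equivalent to it --- then two Local $k$-WL equivalent host graphs must realize the same multiset of stable local colorings, hence the same multiset of rooted neighborhood isomorphism types, hence the same value of $\sum_v a_H(v)$ and, by the previous paragraph, the same count of $H$. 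To certify the required $k$ for each case I would simply invoke the cited identification theorems: $3$-WL for planar graphs \cite{kiefer2019weisfeiler}, $k$-WL for treewidth at most $k$ \cite{decompose_graphs}, $(3k+4)$-WL for rankwidth at most $k$ \cite{rankwidth_WL}, and $(4k+3)$-WL for Euler genus at most $k$ \cite{grohe2019linear}. Each of these classes is closed under taking induced subgraphs, so $G_v^r$ inherits the relevant width/genus bound; and since the distance coloring only refines the initial partition, the uncolored identification guarantees carry over to the colored neighborhoods.

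The step I expect to be the main obstacle is the bridge from \emph{isomorphism identification}, which is all the cited results literally provide, to \emph{count invariance}. Three points need care: (a) verifying that the radius argument truly confines every copy of $H$ to a single $G_v^r$, which rests on $c$ being a center of $H$; (b) the local-to-global bookkeeping --- keeping the orbit/automorphism normalization of the key vertex consistent so that equal vertex-matched local counts provably force equal global counts; and (c) transferring the identification results from uncolored to distance-colored graphs. Points (a) and (c) are routine, but (b) is where the argument must be assembled carefully, since a miscounted multiplicity would break the reduction even though the underlying local isomorphism types match.
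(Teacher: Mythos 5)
Your proposal takes essentially the same route as the paper: the paper's own proof likewise reduces the count to the rooted local neighborhoods (with the radius and orbit-normalization bookkeeping handled by its Lemmas~\ref{local_induced}, \ref{r_hop_neighborhood} and \ref{local_subgraph}) and then directly invokes the same four identification theorems for planar, bounded-treewidth, bounded-rankwidth and bounded-genus graphs, remarking that the distance coloring only refines the initial partition and so only adds expressive power. If anything you are more explicit than the paper about the two delicate points you flag --- that color-preserving isomorphisms are automatically root-preserving, and that the uncolored identification results must transfer to the distance-colored neighborhoods --- which the paper dispatches with the single sentence that colored $k$-WL is at least as expressive as uncolored $k$-WL.
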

%For proof please see \ref{Proof:Isomorphism}.
\begin{proof}\label{Proof:Isomorphism}
Consider the subgraph induced by the vertex $v$ and its $r-$hop neighbourhood in $G,$ say $G_v^r,$ and the subgraph induced by the vertex $u$ and its $r-$hop neighbourhood in $H,$ say $H_u^r.$ Suppose both structures belong to \emph{good} graph classes. Now, we run corresponding $k$ based on the local substructure as mentioned in the theorem. If the colour histogram of the stable colour matches for both graphs. This implies that both the graphs are isomorphic. Thus, the number of subgraphs and induced subgraphs in both of the substructures are also the same.
%Note that in this case, the multiplicity of the same subgraph counted many times is not possible as we are checking for the isomorphism of the substructure. There would not be a possibility that one of the substructures has more edges as compared to another and that the same pattern is counted with different multiplicities when we iterate it over all vertices. 
% \ref{local_wl_power}Theorem 2

Also, we run respective $k-$WL on a coloured graph, where vertices at a distance $i$ from $v$ are coloured $i$. So, it is at least as expressive as running $k-$WL on an uncoloured graph. We can also show that it is strictly more expressive in distinguishing non-isomorphic graphs. Thus, all the $k-$WL mentioned corresponding to \emph{good} graph classes are sufficient for counting the number of patterns appearing as subgraphs and induced subgraphs.
\end{proof}
\begin{corollary}
\label{Corollary 1}
  If $G_v^r$ is amenable, for all $v\in V(G),$ then Local $1-$WL outputs the exact count of the patterns appearing as subgraph and induced subgraph.
\end{corollary}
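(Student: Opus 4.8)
The plan is to treat Corollary~\ref{Corollary 1} as the $k=1$ instance of Theorem~\ref{isomorphism}, with the relevant \emph{good} graph class being the amenable graphs. Recall that, by definition, a graph is amenable precisely when it is $1-$WL identifiable, i.e.\ no non-isomorphic graph is $1-$WL equivalent to it. Thus amenability plays exactly the role that planarity, bounded treewidth, bounded rankwidth, or bounded Euler genus played in the theorem: it is a structural guarantee that $1-$WL alone suffices to pin down the isomorphism type of the local substructure, so the entire argument can be imported from the proof of Theorem~\ref{isomorphism} specialized to $k=1$.

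First I would fix the local substructures $G_v^r$ and $H_u^r$ that arise when comparing neighborhoods (across two host graphs, or within a single host graph for the counting statement). Since each $G_v^r$ is amenable, running $1-$WL on it yields a stable coloring that determines its isomorphism type: if the color histograms of $G_v^r$ and $H_u^r$ agree, then $G_v^r \cong H_u^r$. Because isomorphic graphs have identical counts of every pattern, both as a subgraph and as an induced subgraph, the per-vertex local counts are invariant across $1-$WL equivalent amenable neighborhoods. This is the same chain of implications used in Theorem~\ref{isomorphism}.

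Next I would account for the distance-based coloring used by Local $1-$WL, in which each vertex of $G_v^r$ at distance $i$ from the root receives color $i$. Initializing $1-$WL from this partition can only refine the coloring relative to the uncolored run, so Local $1-$WL is at least as expressive as plain $1-$WL on $G_v^r$; hence if uncolored $1-$WL already identifies the amenable neighborhood, the colored version does as well, and the local isomorphism-type conclusion is preserved. Since a pattern of radius $r$ is entirely contained in the $r$-hop neighborhood of (a choice of) its key/center vertex, the global count is then recovered by aggregating the exact local counts over all $v \in V(G)$, with the standard normalization accounting for the number of center/key vertices per pattern.

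The main obstacle I anticipate is not the isomorphism argument, which is immediate from the definition of amenability, but the bookkeeping in the aggregation step: one must argue that summing the exact per-neighborhood counts reproduces the true global subgraph and induced-subgraph counts without over- or under-counting. This requires that each occurrence of the pattern be attributed to a controlled number of local neighborhoods via its key vertex, and that the key-vertex choice be consistent across isomorphic copies — the same aggregation mechanism invoked elsewhere in this section for radius-$r$ patterns.
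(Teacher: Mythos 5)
Your proposal follows essentially the same route as the paper: the corollary is just the $k=1$ specialization of Theorem~\ref{isomorphism} with amenability (i.e.\ $1-$WL identifiability) playing the role of the \emph{good} graph class, combined with the paper's own observations that the distance-based initial coloring only refines plain $1-$WL and that global counts follow from the orbit-normalized local aggregation of Lemmas~\ref{local_induced} and~\ref{local_subgraph}. The aggregation ``bookkeeping'' you flag as the main obstacle is precisely what those lemmas already settle, so there is no gap relative to the paper's argument.
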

\begin{corollary}
\label{Corollary 2}
    Running $1-$WL guarantees the exact number of subgraphs and induced subgraphs of all patterns of radius one, when the maximum degree of the host graph is bounded by 5.

    Similarly, if the maximum degree of the host graph is bounded by $15$, then running $2-$WL is sufficient to count subgraphs and induced subgraphs of all patterns with a dominating vertex.
\end{corollary}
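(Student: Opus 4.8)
The plan is to reduce both claims to a bound on the size of the closed neighbourhood $G_v^1$ and then invoke the smallest graphs on which $1$-WL and $2$-WL fail. A pattern of radius one is exactly a pattern with a dominating vertex, so when embedded in $G$ its key vertex maps to some $v$ and the entire pattern lies inside $N_G[v]=G_v^1$. Hence, by the local counting principle behind \Cref{isomorphism} (isomorphic $r$-hop substructures carry equal subgraph and induced-subgraph counts) and \Cref{Corollary 1}, it suffices to show that each distance-coloured $G_v^1$, in which $v$ is individualized by the distance-$0$ colour, is identified by the corresponding local WL: $1$-WL for part one and $2$-WL for part two.

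First I would bound $|V(G_v^1)|$. If the maximum degree is $\Delta$, then $|V(G_v^1)|\le \Delta+1$ and the neighbour graph $G[N_G(v)]$ has at most $\Delta$ vertices. Since $v$ is the unique distance-$0$ vertex and is adjacent to every other vertex of $G_v^1$, individualizing $v$ and running (coloured) $k$-WL refines the neighbour vertices exactly as $k$-WL does on $G[N_G(v)]$ alone: the colour contributed by $v$ is common to all neighbours and so cannot separate any two of them, whence the induced partition on $N_G(v)$ agrees with the $k$-WL partition of the isolated neighbour graph. Moreover $(G_v^1,v)\cong (H_u^1,u)$ as rooted graphs if and only if their neighbour graphs are isomorphic. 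Thus identifiability of $G_v^1$ by coloured $k$-WL follows from $k$-WL-identifiability of a graph on at most $\Delta$ vertices.

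The two numerical constants then come from the smallest graphs that defeat $1$-WL and $2$-WL. For part one, the smallest pair of non-isomorphic graphs with identical colour-refinement output is $\{C_6, 2C_3\}$ on six vertices, so every graph on at most five vertices is amenable; taking $\Delta\le 5$ makes every neighbour graph have at most five vertices, every $G_v^1$ amenable, and \Cref{Corollary 1} yields exact counts from Local $1$-WL. For part two I would use that $2$-WL is equivalent to the three-variable counting logic $C^{3}$ (\Cref{pebble_wl_relation}) together with the fact that two strongly regular graphs with identical parameters receive identical stable $2$-WL colourings; the smallest non-isomorphic such pair is the Shrikhande graph and the $4\times4$ rook's graph, both SRG$(16,6,2,2)$. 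Consequently every graph on at most fifteen vertices is $2$-WL-identifiable, and $\Delta\le 15$ forces every neighbour graph to have at most fifteen vertices, so Local $2$-WL counts every pattern with a dominating vertex exactly.

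The main obstacle I anticipate is making the reduction of the second paragraph fully rigorous for $k=2$. For $1$-WL the claim that the individualized dominating vertex does not affect the refinement of the neighbours is immediate from the update rule, but for $2$-WL one must verify that the colours of neighbour--neighbour pairs evolve under the local dynamics exactly as in the isolated neighbour graph, i.e.\ that the extra common vertex $v$ contributes only a uniform shift and never additional distinguishing power. I would establish this through the bijective pebble game of \Cref{pebble_wl_relation}, transferring a Spoiler strategy on the two neighbour graphs into one on the rooted closed neighbourhoods by pairing $v\mapsto u$ for free and then copying moves. A secondary point to pin down is the minimality input for $2$-WL: what is actually used is that no non-isomorphic pair on at most fifteen vertices is $2$-WL-equivalent, with the SRG pair on sixteen vertices witnessing that the constant $15$ cannot be increased.
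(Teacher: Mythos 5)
Your proof is correct and follows exactly the derivation the paper intends (it states \Cref{Corollary 2} without an explicit proof, as a consequence of the locality results and \Cref{Corollary 1}): a dominating-vertex pattern rooted at $v$ lives entirely in $N_G[v]$, the individualized root reduces identification of the coloured $G_v^1$ to identification of $G[N_G(v)]$ on at most $\Delta$ vertices, and the constants $5$ and $15$ come precisely from the extremal witnesses you name ($C_6$ versus $2K_3$ for $1$-WL, and the Shrikhande/rook's graph pair of SRG$(16,6,2,2)$ for $2$-WL). Note that for the corollary you only need the one-directional pebble-game transfer (a Spoiler win on the neighbour graphs lifts to the rooted closed neighbourhoods, since any colour-preserving bijection fixes $v\mapsto u$ and restricts to the neighbour sets), so the ``uniform shift'' converse you flag as the main obstacle is not actually required.
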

\subsection{Counting Induced Subgraphs}\label{subsection_induced subgraph}
The following lemma shows that we can easily aggregate the local counts of the pattern $H$ appearing as an induced subgraph to get the count of $H$ over the entire graph.
\begin{restatable}{lemma}{lthree}
\label{local_induced}
%Let $P$ be the pattern and $G$ be the host graph. Assume $u$ be the key vertex of $P$. Then the number of the induced subgraph $P$ can be counted exactly size of orbit of $u_i$ in $P$ many times if we sum up the count by fixing mapping of $u_i$ to each vertex in $G$ that is
\small
\begin{equation}
%\label{eqn}
	IndCount(H,G)= \frac{\sum_{v\in V(G)} IndCount_{(u,v)}(H,G_v^r)}{|Orbit_H(u)|}
% \sum_{v_i\in V(G)} \frac{Ind-Count_{(u,v_i)}(P,G)}{|Orbit_P(u)|}
	\end{equation}
%\text{   (Proof in \ref{proof:local_induced}.)}
\end{restatable}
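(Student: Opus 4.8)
The plan is to prove the identity by a double-counting argument over pairs consisting of a root vertex together with an induced copy of $H$, using the fact that the key vertex $u$ is a center of $H$ (with eccentricity equal to the radius $r$) to guarantee that every copy is captured inside the local ball $G_v^r$.

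First I would fix the interpretation of the two quantities: $IndCount(H,G)$ is the number of vertex subsets $S\subseteq V(G)$ with $G[S]\cong H$, while $IndCount_{(u,v)}(H,G_v^r)$ is the number of subsets $S'\subseteq V(G_v^r)$ with $G_v^r[S']\cong H$ admitting an isomorphism that sends $u$ to $v$. The central structural observation (the \emph{locality lemma}) is that rooting a copy at its center forces it into the ball $G_v^r$: if $\phi\colon H\to G[S]$ is an isomorphism with $\phi(u)=v$, then since $u$ has eccentricity $r$ in $H$ every vertex of $H$ lies within distance $r$ of $u$, hence every vertex of $S$ lies within distance $r$ of $v$ \emph{in $G[S]$}, and therefore within distance $r$ of $v$ \emph{in $G$} as well, because distances in an induced subgraph can only exceed the corresponding distances in the host graph. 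Thus $S\subseteq V(G_v^r)$, and since $G_v^r$ is itself induced, $G_v^r[S]=G[S]\cong H$. This shows that an induced copy of $H$ in $G$ rooted at $v$ through $u$ is exactly the same object as an induced copy of $H$ in $G_v^r$ with $v$ in the role of $u$, so passing to the local neighbourhood neither loses nor creates copies.

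Next I would count the set $P$ of pairs $(v,S)$ where $S$ is an induced copy of $H$ in $G$ and $v\in S$ is a vertex that can serve as the image of $u$ under some isomorphism $H\to G[S]$. Grouping by the root vertex first, the locality lemma gives $|P|=\sum_{v\in V(G)} IndCount_{(u,v)}(H,G_v^r)$. Grouping instead by the copy $S$, I would fix one isomorphism $\phi_0\colon H\to G[S]$ and observe that any isomorphism $\phi$ differs from $\phi_0$ by an automorphism $\alpha\in\mathrm{Aut}(H)$, so the admissible roots are precisely $\phi_0(\{\alpha(u):\alpha\in\mathrm{Aut}(H)\})=\phi_0(Orbit_H(u))$, a set of size $|Orbit_H(u)|$ since $\phi_0$ is injective. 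Hence each copy contributes exactly $|Orbit_H(u)|$ pairs, giving $|P|=IndCount(H,G)\cdot|Orbit_H(u)|$. Equating the two expressions and dividing by $|Orbit_H(u)|$ yields the claim.

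I expect the main obstacle to be the locality lemma, specifically getting the direction of the distance inequality right: it is essential that distances in the induced subgraph $G[S]$ dominate those in $G$, so that an eccentricity bound measured inside $H$ (equivalently inside $G[S]$) transfers to a containment statement about the ball in $G$. The orbit-counting step is then routine, but care is needed to confirm that $IndCount_{(u,v)}$ counts unrooted subsets, so that a fixed copy together with a fixed admissible root contributes exactly once; this is precisely what makes the fibers of the projection $P\to\{\text{copies of }H\}$ have the uniform size $|Orbit_H(u)|$.
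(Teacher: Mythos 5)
Your proof is correct and follows essentially the same route as the paper: a double-counting argument showing that each induced copy of $H$ is counted exactly $|Orbit_H(u)|$ times, since the admissible images of the key vertex $u$ in a fixed copy are precisely $\phi_0(Orbit_H(u))$ for any fixed isomorphism $\phi_0$. The only difference is that you additionally prove the containment $S\subseteq V(G_v^r)$ via the eccentricity bound and the inequality $d_G\leq d_{G[S]}$, which the paper defers to its separate locality lemma (its Lemma~\ref{r_hop_neighborhood}); folding it in makes your version self-contained but does not change the argument.
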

\begin{proof}
    Suppose a pattern $H$ in $G$ appears as an induced subgraph. So, an injective homomorphism from $V(H)$ to $V(G)$ exists, such that it preserves the edges. We fix one subgraph and find the number of mappings possible. Suppose one of the mappings maps $u_i$ to $v_j$ where $j\in |V(H)|$. Now, we can see that the number of mappings of $u_i$(key vertex) is the same as the size of the orbit of $u_i$ in $H.$ This proves the claim that every induced subgraph has been counted, the size of the orbit many times.
\end{proof}

Assume that we want to count the number of occurrences of pattern $H$ in $G$ as (an induced) subgraph. Let $u$ be the key vertex of $H$ and $r$ be the radius of $H$. 

\begin{restatable}{lemma}{lfour}
  \label{r_hop_neighborhood}
    It is sufficient to look at the $r-$hop neighborhood of $v_i$ to compute $Count_{(u,v)}(H,G)$ or $IndCount_{(u,v)}(H,G)$. %\text{     (Proof in \ref{proof:r_hop_neighborhood}.)}  
\end{restatable}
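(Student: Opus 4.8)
The plan is to show that the pattern $H$, having radius $r$ and key vertex $u$, is entirely contained within the $r$-hop neighborhood of whatever host vertex $v_i$ the key vertex $u$ maps to. Recall that the radius of $H$ is the minimum eccentricity over all vertices, and $u$ is chosen to be a center vertex, so by definition the eccentricity of $u$ in $H$ is exactly $r$; that is, every vertex of $H$ lies within distance $r$ of $u$ in $H$.

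First I would fix an (induced) subgraph embedding, i.e. an injective homomorphism $\phi : V(H) \to V(G)$ that realizes a copy of $H$ in $G$ with $\phi(u) = v_i$. The key observation is that a homomorphism is non-expanding on distances: if two vertices $a, b \in V(H)$ satisfy $d_H(a,b) = \ell$, then there is a path of length $\ell$ in $H$ between them, and its image under $\phi$ is a walk of length $\ell$ in $G$, so $d_G(\phi(a), \phi(b)) \le \ell$. Applying this with $a = u$: for every vertex $w \in V(H)$ we have $d_H(u,w) \le r$ (since $u$ is a center), hence $d_G(v_i, \phi(w)) \le r$. Therefore every image vertex $\phi(w)$ lies in the $r$-hop neighborhood of $v_i$, i.e. $\phi(V(H)) \subseteq V(G_{v_i}^r)$.

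Next I would conclude that the induced subgraph on $\phi(V(H))$, together with all edges of $H$ mapped through $\phi$, is already present inside $G_{v_i}^r$; since $G_{v_i}^r$ is the subgraph induced on the $r$-hop ball, it contains all host edges among these vertices, so both the subgraph count and the induced-subgraph count of this particular copy are visible locally. Conversely, any copy of $H$ with key vertex mapped to $v_i$ that is detected within $G_{v_i}^r$ is trivially a copy in $G$, because $G_{v_i}^r$ is a subgraph of $G$ and induced-ness is preserved (the induced subgraph relation is hereditary: $G_{v_i}^r[\phi(V(H))] = G[\phi(V(H))]$ by the induced-subgraph property of the ball). This two-way containment gives the equality $Count_{(u,v_i)}(H,G) = Count_{(u,v_i)}(H,G_{v_i}^r)$ and likewise for $IndCount$.

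I do not expect a serious obstacle here; the only point requiring care is the induced-subgraph direction, where one must verify that restricting to $G_{v_i}^r$ does not accidentally add or remove edges among the image vertices. This is where the definition of $G_v^r$ as the subgraph \emph{induced} on the $r$-hop neighborhood (rather than merely some spanning subgraph on those vertices) is essential: since all of $\phi(V(H))$ lies inside the ball, every host edge among them is retained in $G_{v_i}^r$, so adjacency and non-adjacency agree between $G$ and $G_{v_i}^r$ on this vertex set, and the induced count is faithfully preserved. The distance-contraction argument is the heart of the proof, and once it is established the containment and the count equality follow immediately.
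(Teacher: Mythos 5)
Your proof is correct and takes essentially the same route as the paper's: both rest on the observation that a homomorphism cannot increase distances, so since the key vertex $u$ is a center of $H$ with eccentricity $r$, every image vertex lies within distance $r$ of $\phi(u)=v_i$, placing the entire copy inside $G_{v_i}^r$. You are in fact somewhat more careful than the paper, which leaves implicit both the converse containment and the check that induced-ness is faithfully preserved because $G_{v_i}^r$ is the subgraph \emph{induced} on the $r$-hop ball.
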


\begin{proof}
    Suppose a subgraph exists in $G$ that is isomorphic to $H$ or isomorphic to some supergraph of $H$ with an equal number of vertices, where $u$ is mapped to $v_i$. The homomorphism between the graphs preserves edge relations. Consider the shortest path between $u$ and any arbitrary vertex $u_i$ in $H$. The edges are preserved in the homomorphic image of $H$. Therefore, the shortest distance from $f(u)$ to any vertex $f(u_i)$ in $G$ is less than equal to $r$. So, it is sufficient to look at the $r-$hop neighborhood of $v_i$ in $G$.
\end{proof}

From the above two lemmas, we can conclude the following theorem:
\begin{restatable}{theorem}{ththree}
   \label{r_hop_neighbour}
    It is sufficient to compute $IndCount_{(u,v)}(H,G_v^r),$ for $i\in [n],$ where $G_v^r$ is induced subgraph of $r-$hop neighborhood vector of $v$. 
\end{restatable}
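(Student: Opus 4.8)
The plan is to obtain this theorem as an immediate consequence of the two preceding lemmas, \Cref{local_induced} and \Cref{r_hop_neighborhood}, which between them already supply both ingredients; the work is just to stitch them together cleanly.

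First I would invoke \Cref{r_hop_neighborhood} to certify that the key-vertex-anchored count is genuinely local. For any $v \in V(G)$, consider an induced copy of $H$ in which the key vertex $u$ maps to $v$. Since an induced-subgraph embedding preserves adjacency, it cannot increase distances, so every vertex of $H$ maps to a vertex within distance $r$ of $v$, where $r$ is the radius of $H$. Hence all such copies lie inside the $r$-hop neighborhood of $v$. Because $G_v^r$ is the \emph{induced} subgraph on that neighborhood, it carries exactly the adjacencies of $G$ among those vertices: no anchored copy is lost by restricting from $G$ to $G_v^r$, and none is spuriously created. This gives $IndCount_{(u,v)}(H,G) = IndCount_{(u,v)}(H,G_v^r)$, so each anchored count is computable entirely within the local subgraph.

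Next I would apply \Cref{local_induced}, which recovers the global induced count as
\begin{equation}
IndCount(H,G) = \frac{\sum_{v\in V(G)} IndCount_{(u,v)}(H,G_v^r)}{|Orbit_H(u)|}.
\end{equation}
Combining this with the locality established above, every summand on the right is a quantity determined solely by $G_v^r$, and their normalized sum equals $IndCount(H,G)$. Therefore the collection $\{IndCount_{(u,v)}(H,G_v^r)\}_{v\in V(G)}$ suffices to determine the global induced count, which is precisely the claim.

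I expect the only real subtlety to be a bookkeeping one rather than a genuine obstacle: confirming that the normalization by $|Orbit_H(u)|$ exactly cancels the multiplicity with which each induced copy of $H$ is tallied as $v$ ranges over the images of the key vertex. Each induced copy is anchored at the key vertex in exactly $|Orbit_H(u)|$ ways, since $u$ can be sent to any vertex in its orbit by an automorphism of $H$; dividing by $|Orbit_H(u)|$ removes this overcount precisely, so the two lemmas compose without any further argument.
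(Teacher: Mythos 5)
Your proposal matches the paper's own derivation exactly: the paper states the theorem as an immediate consequence of \Cref{local_induced} and \Cref{r_hop_neighborhood}, which is precisely the composition you carry out, and your extra care about the $|Orbit_H(u)|$ normalization just makes explicit what the paper already established in the proof of \Cref{local_induced}. The argument is correct and essentially identical in route to the paper's.
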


The following theorem gives a direct comparison with the $S_k$ model \citep{papp2022theory}, where each node has an attribute which is the count of induced subgraphs of size at most $k.$
\begin{restatable}{theorem}{thfour}
    Local $k-$WL can count all induced subgraphs of size at most $(k+2)$.
\label{induced_subgraph_general}
%\text{    (Proof in \ref{Proof:induced_subgraph_proof}.)}
\end{restatable}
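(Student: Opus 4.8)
The plan is to reduce the global induced-subgraph count to a purely local, \emph{rooted} quantity, and then show that this rooted quantity is determined by the stable coloring that Local $k-$WL stores at each vertex. By \Cref{local_induced} the total count $IndCount(H,G)$ is obtained by summing the rooted counts $IndCount_{(u,v)}(H,G_v^r)$ over all $v$ and dividing by $|Orbit_H(u)|$, and by \Cref{r_hop_neighbour} each such rooted count may be computed inside $G_v^r$ alone. Hence it suffices to prove the following local statement: if $v\in V(G_1)$ and $w\in V(G_2)$ receive the same Local $k-$WL color (equivalently, $G_1$ distance-colored and individualized at $v$ is $k-$WL equivalent to $G_2$ distance-colored and individualized at $w$), then $IndCount_{(u,v)}(H,G_1)=IndCount_{(u,w)}(H,G_2)$ for every pattern $H$ with $|V(H)|\le k+2$ and every choice of key vertex $u\in V(H)$.

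Next I would pass from induced counts to homomorphism counts. By the standard M\"obius/inclusion--exclusion inversion over the lattice of quotients (the spasm relation used in \cite{curticapean2017homomorphisms}), the rooted induced count $IndCount_{(u,v)}(H,\cdot)$ is a fixed finite linear combination of rooted homomorphism counts $Hom_{(u',v)}(F,\cdot)$, where each $F$ is a homomorphic image of $H$ and $u'$ is the image of the key vertex $u$. Crucially, every such $F$ satisfies $|V(F)|\le|V(H)|\le k+2$, so deleting the root image leaves $|V(F\setminus u')|\le k+1$ and therefore $tw(F\setminus u')\le k$, since any graph on $k+1$ vertices has treewidth at most $k$. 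Thus the whole statement reduces to showing that the Local $k-$WL color of $v$ determines $Hom_{(u',v)}(F,\cdot)$ for every rooted pattern $(F,u')$ with $tw(F\setminus u')\le k$.

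This last claim is the heart of the argument and the step I expect to be the main obstacle, because $F$ itself can have treewidth $k+1$, so the vanilla ``$k-$WL determines $\hom(T,\cdot)$ for $tw(T)\le k$'' principle does not apply directly; the extra unit of width must be paid for by the rooting. I would prove it by exploiting that, in the colored subgraph $G_v^r$, the root carries a unique color, so any bijection the Duplicator proposes in the bijective $(k+1)$-pebble game is forced to map $v$ to $w$: the root therefore behaves like a permanently placed pebble that costs nothing, giving the Spoiler the effective power of $k+1$ movable markers together with the locked root. Equivalently, in a tree decomposition of $F$ of width $\le k+1$ one may place $u'$ in every bag for free once its image is fixed, so the genuinely free part $F\setminus u'$ only needs width $k$; this is precisely the individualization viewpoint already used in \Cref{local_wl_power}. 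Upgrading this from mere distinguishing to equality of \emph{counts} is what requires the bijective (counting) form of the pebble game together with the homomorphism-count characterization, while the additional distance coloring of $G_v^r$ only refines the partition and hence can never merge tuples that the uncolored rooted computation would separate, so it does no harm. Combining the key claim with the inversion formula and aggregating through \Cref{local_induced} then yields that Local $k-$WL counts every induced subgraph of size at most $k+2$.
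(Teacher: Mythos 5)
Your proposal is correct in outline but takes a genuinely different route from the paper's proof. The paper argues directly at the level of the WL computation: after the same reduction to rooted local counts via \Cref{local_induced}, it shows that two $k-$WL-equivalent distance-colored rooted subgraphs have the same induced count of the $(k+1)$-vertex pattern $P'=P-v$, by noting that the initial coloring of $k$-tuples fixes isomorphism types of $k$-vertex sets and that the multiset in a single refinement iteration determines how one additional vertex $w$ attaches to a $k$-tuple; adjacency to the root is then read off from the distance-$1$ color class, with the final bookkeeping done in the counting logic $C^{k+1}$. You instead invert the rooted induced count into rooted homomorphism counts and appeal to the treewidth characterization of $k-$WL equivalence (\Cref{wl_invariant}), exploiting that every relevant image $F$ has at most $k+2$ vertices, so $F\setminus u'$ has at most $k+1$ vertices and hence treewidth at most $k$. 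Both arguments rest on the same decomposition $k+2=1+(k+1)$, with the individualized root paying for the extra vertex, but yours is more modular and in fact proves something stronger — invariance of all rooted homomorphism counts with $tw(F\setminus u')\le k$, which also connects naturally to the paper's later subgraph-counting bound via hereditary treewidth — while the paper's argument is more elementary and avoids any rooted version of the homomorphism-count theorem. Two points need tightening. First, in the inversion the patterns $F$ range over homomorphic images of \emph{supergraphs} of $H$ on the same vertex set (induced counts pass through non-induced subgraph counts before reaching homomorphism counts), though the bound $|V(F)|\le k+2$ is unaffected. Second, and more substantively, your key claim cannot be cited from \Cref{wl_invariant} as stated, since that theorem concerns unrooted patterns of treewidth at most $k$, and your $F$ may have treewidth $k+1$; the "root as a free pebble" picture gives distinguishing power but not, by itself, equality of counts. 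The clean repair — which you gesture at — is to note that every rooted homomorphism pins $u'$ to $v$, and the distance coloring gives $v$ and $N(v)$ unique colors, so $Hom_{(u',v)}(F,G_v^r)$ equals a homomorphism count of $F\setminus u'$ into the colored local graph with "adjacent to the root" imposed as a unary color constraint; the colored/labelled version of the homomorphism-count characterization then applies to $F\setminus u'$ with $tw(F\setminus u')\le k$, closing the gap.
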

\begin{proof}\label{Proof:induced_subgraph_proof}
Suppose, if possible $G$ and $H$ are Local $k-$WL equivalent and $|P|\leq (k+2),$ where $P$ is the pattern to be counted. We will choose one vertex $v$ as a key vertex. Now, we want to count $P-v$ locally. Assume that the distance from $v$ to any vertex is $'r'$. So, we take the $r-$hop neighborhood of every vertex in $G$ and $H,$ respectively. It is easy to see that the number of induced subgraphs or subgraphs of size $k$ is the same locally if they are $k-$WL equivalent since we do an initial coloring of $k-$tuples based on isomorphism type. Now, suppose $P'=P-v$ is of size $(k+1)$. 

Let $P_i= P'-v_i,$ for $i\in[k+1].$ It is possible that there may exist two pairs of subgraphs that are isomorphic. In that case, we remove such graphs. Let $P_1, P_2,\ldots, P_l$ be all pairwise non-isomorphic graphs. $k-$WL would initially color the vertices according to isomorphism type. So, the subgraph count of $P_i$ is the same in any two $k-$WL equivalent graphs. Let $V(P_i)=(u_1,u_2,\ldots u_k)$. We see the refined color after one iteration in equation \ref{k-dimension update}.
Now, we can observe that by looking at the first coordinate of the color tuples in a multiset, we can determine the adjacency of $u$ with $(u_2,u_3,\ldots u_k)$. Similarly, after seeing the second coordinate of the color tuples in the multiset, we can determine the adjacency of $u$ with $(u_1,u_2,\ldots u_k)$.

Consider, $\forall {u}\in V(G),$ $P_i\cup \{u\}=H$ will give the count of induced subgraph $P'$. Thus, if $G$ and $H$ are $k-$WL equivalent, then the size of each color class after the first iteration will be the same. 

Now, for each $P'$ with $v$ will form $P$ if it has exactly $|N_P(v)|$ many vertices of color $1$. Also, as mentioned earlier that $k-$WL is equivalent to $C^{k+1}$ logic, and we have to add unary logic stating that the color of neighbor to be $1$. The $k-$WL and $C^{k+1}$ are equivalent, so we have to add the unary relation putting the condition of the required colors. Again, using \ref{local_induced} we can say that running $k-$WL locally can count all patterns of size $(k+2)$ appearing as an induced subgraph.
\end{proof}

We can see the corollary below in which we mention the set of patterns shown in \citep{chen2020can}.

\begin{corollary} 
\label{Corollary 3}
Local $2-$WL on the subgraphs induced by neighborhood of each vertex can count each pattern appearing as induced subgraphs as well as subgraphs of (a) 3-star (b) triangle (c) tailed triangle (d) chordal cycle (e) attributed triangle.
\end{corollary}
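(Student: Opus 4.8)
The plan is to derive everything from \Cref{induced_subgraph_general} specialized to $k=2$, which already guarantees that Local $2-$WL counts all induced subgraphs of size at most four, together with the classical linear inversion between (non-induced) subgraph counts and induced subgraph counts. First I would observe that every pattern in the list has at most four vertices: the triangle and the attributed triangle have three vertices, while the $3$-star, the tailed triangle, and the chordal cycle each have four. Hence the induced-subgraph half of the claim is immediate—applying \Cref{induced_subgraph_general} with $k=2$ shows that Local $2-$WL equivalent graphs agree on the induced-subgraph count of every pattern of size at most $4$, and all five patterns qualify.

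For the (non-induced) subgraph counts I would invoke the standard inversion formula. For a fixed pattern $H$ on $m$ vertices, the subgraph count decomposes as
\begin{equation}
Count(H,G) = \sum_{F\,\supseteq\,H,\ |V(F)|=m} sub(H,F)\cdot IndCount(F,G),
\end{equation}
where $F$ ranges over all supergraphs of $H$ on the same vertex set and $sub(H,F)$ is the fixed number of copies of $H$ as a subgraph inside $F$. The justification is that a copy of $H$ as a subgraph is determined by choosing an $m$-subset $S$ whose induced graph $G[S]\cong F$ for some supergraph $F$ of $H$, and then by an embedding of $H$ into $F$; the latter contributes the constant $sub(H,F)$, which depends only on the patterns and not on $G$.

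Since each pattern here has $m\le 4$, every relevant supergraph $F$ in the sum also has at most four vertices, so by the first part $IndCount(F,G)$ is invariant under Local $2-$WL equivalence. Thus $Count(H,G)$, being a fixed linear combination of Local $2-$WL-invariant quantities, is itself invariant, i.e.\ exactly computable by Local $2-$WL. The local-to-global aggregation needed to pass from the $r$-hop counts $IndCount_{(u,v)}(F,G_v^r)$ to the full-graph quantity $IndCount(F,G)$ is already supplied by \Cref{local_induced} and \Cref{r_hop_neighbour}, so no new aggregation argument is required.

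Finally, the attributed triangle is handled identically once we note that Local $2-$WL starts from the given vertex colours/attributes (as described for coloured inputs in \Cref{WL_SECTION}): the inversion formula above holds verbatim with $IndCount$ taken over colour-preserving embeddings, and all relevant coloured patterns again have at most four vertices. The only step requiring genuine care is the bookkeeping in the inversion formula—enumerating the supergraphs $F$ and checking that each has at most four vertices so that \Cref{induced_subgraph_general} applies—but this is routine because $m\le 4$ bounds the size of every $F$, and the coefficients $sub(H,F)$ are small explicit constants for each of the five listed patterns.
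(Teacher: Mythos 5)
Your proposal is correct, and it splits naturally into a half that matches the paper and a half that takes a different route. The induced-subgraph part is exactly the paper's (implicit) derivation: the corollary is stated as a direct consequence of \Cref{induced_subgraph_general} with $k=2$, since each of the five patterns has at most $k+2=4$ vertices, and the local-to-global aggregation is supplied by \Cref{local_induced} and \Cref{r_hop_neighbour}. For the non-induced subgraph counts, however, you argue via the classical inversion $Count(H,G)=\sum_{F}sub(H,F)\cdot IndCount(F,G)$, where $F$ ranges over same-size supergraphs of $H$; since $|V(F)|\le 4$ for every such $F$, invariance under Local $2-$WL reduces to the induced case (and your formula is sound: because $|V(F)|=|V(H)|$, every copy of $H$ in $F$ is spanning, so partitioning copies of $H$ in $G$ by their vertex set gives exactly this identity). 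The paper instead routes subgraph counts through homomorphism counts and hereditary treewidth (\Cref{eq:subcount}, \Cref{wl_invariant}, \Cref{spasm_pattern}); its \Cref{neural_comparison_wl} in fact proves the stronger statement that Local $1-$WL already counts these five patterns as subgraphs, because each admits a vertex $v$ with $htw(H-v)=1$. Your inversion argument is more elementary and self-contained---no homomorphism machinery---but it is tied to the size bound $k\ge |H|-2$ and so cannot recover that sharper $1-$WL result, whereas the treewidth route exploits pattern structure rather than raw size. One point you should make explicit: the corollary claims sufficiency of the subgraphs induced by the \emph{neighborhood} of each vertex, i.e.\ $r=1$; this holds because every listed pattern has a dominating vertex, which must be chosen as the key vertex $u$ so that \Cref{local_induced} and \Cref{r_hop_neighborhood} apply with $r=1$. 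Your colored-inversion treatment of the attributed triangle is consistent with the paper's remark (cf.\ \Cref{attribute_patterns}) that all counting statements extend verbatim to attributed patterns.
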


Based on the above results, we now present the algorithm \ref{algo:count_induced_subgraph} for counting patterns appearing as an induced subgraph in $G$ using the localized algorithm. The function $IndCount_{u,v}(H, G_v^r)$ takes as input the pattern $H$, the attributed version of $G_v^r$, and returns the induced subgraph count of $H$ in $G_v^r,$ where $u \in H$ is mapped to $v \in G_v^r$. Notice that the function $IndCount_{u,v}(H, G_v^r)$ is a predictor that we learn using training data. 

\begin{algorithm}[ht]
\caption{Counting induced subgraph H in G\label{algo:count_induced_subgraph}}
\label{Induced subgraph Count}
\begin{algorithmic}[1]
\State Given $G,H$.
\State Find $r=radius(H)$ and let $u\in H$ be a corresponding center vertex.  
\For{each vertex $v$ in V(G)}
    \State Extract subgraph $G_v^r$.
    \State Find suitable $k$, which will give an exact count based on the local substructure.
    \State Run Local+Layer $k-$WL on $G_v^r$. %\hspace{1cm} \Comment{Polling colour on each node and running addition GNN on the entire graph is not required.}
    \State Calculate $IndCount_{u,v}(H,G_v^r)$.
\EndFor
\State return $\frac{\sum_{v\in V(G)} IndCount_{(u,v)}(H,G_v^r)}{|Orbit_H(u)|}$
\end{algorithmic}
\end{algorithm}

The running time and space requirement for Algorithm \ref{algo:count_induced_subgraph} is dependent on the value of $k$ and $r$. %So, we would like to optimize it. 
We can make informed choices for the values of $k$ and $r.$ Notice that the value of $k$ is chosen based on the local substructure. Also, the value of $r$ is the radius of $H.$
Suppose the local substructure is simple (planar, bounded treewidth, bounded rankwidth \ref{isomorphism}). In that case, $k-$WL, for small values of $k,$ is sufficient for counting induced subgraph $H.$ Otherwise, we have to run $(|H|-2)-$WL in the worst case.
%\begin{corollary}
%
\subsection{Deciding k based on the pattern for counting subgraphs}\label{subsection_subgrpah count}
For any pattern $H$, it turns out that the number of subgraph isomorphisms from $H$ to a host graph $G$ is simply a linear combination of all possible graph homomorphisms from $H'$ to $G$ ($Hom(H', G)$ is the number of homomorphisms from $H'$ to $G$) where $H'$ is the set of all homomorphic images of $H$. That is, there exists constants $\alpha_{H'} \in \mathbb{Q}$ such that:
\begin{equation}\label{eq:subcount}
Count(H,G) = \sum_{H'} \alpha_{H'} Hom(H',G)
\end{equation}
where $H'$ ranges over all graphs in $H'$. This equation has been used to count subgraphs by many authors (Please refer \citep{Alon1997, curticapean2017homomorphisms}).
%We call the set of all homomorphic images of $H$ as $Spasm(H)$. We define $tw(Spasm(H))= max_{H'}(tw(H'))$ where $H' \in Spasm(H)$.

\begin{restatable}{theorem}{thfive}\citep{cai1992optimal,Dell2018LovszMW}
    \label{wl_invariant}
For all $k\geq 1$ and for all graphs $G$ and $H$, the following are equivalent: 
\begin{enumerate}
    \item $HOM(F,G) = HOM(F,H)$ for all graph $F$ such that $tw(F)\leq k$.
    \item $k-$WL does not distinguish $G$ and $H$ and 
    \item Graph $G$ and $H$ are $C^{k+1}$ equivalent\footnote{Counting logic with (k+1) variables}.
\end{enumerate}
\end{restatable}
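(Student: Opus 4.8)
The plan is to exploit the fact that the equivalence $(2)\Leftrightarrow(3)$ is not new: it is exactly the Cai--F\"urer--Immerman correspondence between $k$-WL, the bijective $(k+1)$-pebble game, and the counting logic $C^{k+1}$, which is already recorded in \Cref{pebble_wl_relation} and \Cref{Iteration_number_logic}. Hence I would take $(2)\Leftrightarrow(3)$ for granted and only connect the homomorphism condition (1) to them, closing the cycle by establishing $(3)\Rightarrow(1)$ and $(1)\Rightarrow(2)$. The substance of the theorem lies entirely in these two links, which are the Dvo\v{r}\'ak / Dell--Grohe--Rattan half of the statement.

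For $(3)\Rightarrow(1)$, fix any pattern $F$ with $tw(F)\le k$ and a rooted (nice) tree decomposition of width at most $k$, so every bag has at most $k+1$ vertices. The quantity $Hom(F,\cdot)$ is computed by the usual bottom-up dynamic program over this decomposition: for each bag $B$ and each assignment of its vertices to host vertices, one records the number of homomorphisms of the part of $F$ lying below $B$ that agree with this assignment. The crux I would prove, by induction up the tree, is that this partial count depends only on the $C^{k+1}$-type (equivalently, via \Cref{Iteration_number_logic}, the $k$-WL color) of the tuple of images. Introduce and forget nodes change a single coordinate and are absorbed by the WL neighbourhood aggregation, while at a join node the two subtrees share the same bag, so both partial counts --- and hence their product --- are functions of the color of the common tuple. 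Because a bag never exceeds $k+1$ vertices and a forgotten vertex never reappears, the whole computation lives on $k+1$ simultaneously tracked images, matching the $k+1$ variables of $C^{k+1}$. Consequently $Hom(F,G)$ is constant on $C^{k+1}$-equivalence classes, and $(3)$ forces the equality in $(1)$.

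For $(1)\Rightarrow(2)$ I would argue by contraposition. If $G\not\simeq_k H$, then by \Cref{pebble_wl_relation} the Spoiler wins $BP_{k+1}(G,H)$, and by \Cref{Iteration_number_logic} a finite winning strategy exists. From the tree of pebbled configurations produced by such a strategy I would read off a pattern graph $F$: the successive pebble positions form bags of at most $k+1$ vertices glued along the moves, so the connectivity rule of the game endows $F$ with a tree decomposition of width at most $k$. Winning then translates into $Hom(F,G)\ne Hom(F,H)$, i.e.\ an $F$ with $tw(F)\le k$ violating (1). Equivalently, one unwinds the distinguishing $C^{k+1}$ sentence granted by $(2)\Leftrightarrow(3)$ into such a bounded-treewidth pattern.

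The main obstacle is this last direction: fabricating an \emph{explicit} treewidth-$\le k$ pattern whose homomorphism count separates $G$ and $H$ out of the purely combinatorial data of a winning pebble strategy, and verifying that the reuse of pebbles inherent in the $(k+1)$-pebble game keeps the treewidth pinned at $k$ rather than inflating it. In the forward direction the more delicate bookkeeping is checking that the partial counts genuinely factor through the WL coloring at join nodes, where products of multiplicities --- not merely their support --- must be preserved.
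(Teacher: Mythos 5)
You should first note that the paper itself offers no proof of this theorem: it is stated as a quoted result and attributed to \cite{cai1992optimal,Dell2018LovszMW}, so your attempt can only be measured against the standard literature proofs, not an internal argument. Against that yardstick, your treatment of $(2)\Leftrightarrow(3)$ (delegation to the Cai--F\"urer--Immerman correspondence, already recorded in \Cref{pebble_wl_relation} and \Cref{Iteration_number_logic}) is legitimate, and your $(3)\Rightarrow(1)$ is essentially the standard Dvo\v{r}\'ak / Dell--Grohe--Rattan argument: a bottom-up dynamic program over a nice width-$k$ tree decomposition, with the inductive invariant that each partial homomorphism count factors through the $k$-WL color of the bag tuple. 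The join-node worry you raise is benign: if each of the two subtree counts is a function of the color of the shared tuple, so is their product, and introduce/forget nodes change one coordinate, which is exactly what the WL aggregation (equivalently, one reused quantified variable of $C^{k+1}$) tracks. That half of your sketch would survive being written out in full.

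The genuine gap is in $(1)\Rightarrow(2)$. From a Spoiler winning strategy you cannot, as you propose, ``read off'' a single pattern $F$ with $Hom(F,G)\neq Hom(F,H)$: the strategy tree branches over all of Duplicator's bijections, so gluing successive pebble positions into bags yields at best a treewidth-$\le k$ \emph{shape}, with no canonical way to turn the qualitative fact ``Spoiler wins'' into a quantitative discrepancy of homomorphism counts --- winning certifies a distinguishing $C^{k+1}$ sentence, and a sentence is a threshold/Boolean object, not a count. The known proofs close this gap with real machinery: either one shows (Dvo\v{r}\'ak) that the $k$-WL colors after $i$ rounds are determined by, and determine, the homomorphism counts from the tree-unfoldings of the WL computation --- an explicit family of treewidth-$\le k$ patterns --- so that equality of all such counts forces color equality by induction on $i$; or one argues linear-algebraically (Dell--Grohe--Rattan) that the number of tuples satisfying any $C^{k+1}$ formula is a finite rational linear combination of homomorphism counts from treewidth-$\le k$ graphs (``quantum graphs''), whence agreement of all those counts implies $C^{k+1}$ equivalence. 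Note that in both routes it is a \emph{family} of patterns, or a linear combination, that does the work; no single $F$ extracted from the game tree suffices. You correctly flag this direction as the main obstacle, but flagging it is not resolving it, and since the paper supplies no proof to lean on, this half of the theorem remains unproved in your proposal.
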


Using \ref{eq:subcount} and \ref{wl_invariant}, we arrive at the following theorem:
\begin{restatable}{theorem}{thsix}
    \label{spasm_pattern}
 Let $G_1$ and $G_2$ be $k-$WL equivalent and $htw(H)\leq k$. Then subgraph count of $H$ in $G_1$ and $G_2$ are the same.
\end{restatable}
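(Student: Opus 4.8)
The plan is to combine the homomorphism-count expansion of subgraph counts in \Cref{eq:subcount} with the homomorphism-invariance characterization of $k$-WL equivalence in \Cref{wl_invariant}. The only structural input I need is an unpacking of the hypothesis $htw(H) \le k$: by definition the hereditary treewidth is the maximum treewidth over the spasm of $H$, i.e.\ over all homomorphic images $H'$ of $H$. Hence $htw(H) \le k$ says precisely that $tw(H') \le k$ for every $H'$ appearing in the sum of \Cref{eq:subcount}.

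First I would invoke \Cref{eq:subcount} to write $Count(H, G_i) = \sum_{H'} \alpha_{H'} Hom(H', G_i)$ for $i = 1, 2$, where $H'$ ranges over the spasm of $H$ and the rational coefficients $\alpha_{H'}$ depend only on the pattern $H$ and not on the host graph. This is the point that lets the two expansions be compared term by term, since the same coefficients $\alpha_{H'}$ are used for both $G_1$ and $G_2$.

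Next, since $G_1$ and $G_2$ are $k$-WL equivalent, the implication (2)$\Rightarrow$(1) of \Cref{wl_invariant} gives $Hom(F, G_1) = Hom(F, G_2)$ for every graph $F$ with $tw(F) \le k$. Applying this to each $F = H'$ in the spasm --- which is legitimate precisely because $tw(H') \le htw(H) \le k$ --- yields $Hom(H', G_1) = Hom(H', G_2)$ for every term of the sum. Substituting these equalities into the two expansions gives $Count(H, G_1) = \sum_{H'} \alpha_{H'} Hom(H', G_1) = \sum_{H'} \alpha_{H'} Hom(H', G_2) = Count(H, G_2)$, which is the claim.

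There is no genuine obstacle here beyond bookkeeping, since both ingredients are quoted as known results. The one point worth stating carefully is the identification of the index set: I must ensure that the family of graphs $H'$ over which \Cref{eq:subcount} sums is exactly the spasm of $H$, so that the uniform bound $tw(H') \le htw(H)$ covers every term to which \Cref{wl_invariant} is applied. Once that index set is pinned down, the argument reduces to a single substitution.
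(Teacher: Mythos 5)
Your proposal is correct and follows exactly the route the paper takes: the paper derives \Cref{spasm_pattern} directly by combining \Cref{eq:subcount} (the expansion of $Count(H,G)$ as a linear combination of $Hom(H',G)$ over the spasm of $H$) with \Cref{wl_invariant}, noting that $htw(H)\leq k$ bounds the treewidth of every homomorphic image. Your write-up merely spells out the term-by-term substitution that the paper leaves implicit, so there is nothing substantive to distinguish the two arguments.
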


\begin{restatable}{lemma}{lfive}
    \label{local_subgraph}
\small
\begin{equation} \label{eqn2}
	Count(H,G)= \frac{\sum_{v\in V(G)} Count_{(u,v)}(H,G_v^r)}{|Orbit_H(u)|}
% \sum_{v_i\in V(G)} \frac{Ind-Count_{(u,v_i)}(P,G)}{|Orbit_P(u)|}
	\end{equation}
\end{restatable}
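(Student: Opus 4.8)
The plan is to prove this exactly as the induced-subgraph identity (\Cref{local_induced}) was established, since the only difference between the two statements is whether non-edges of $H$ must be mapped to non-edges of $G$ — a distinction that plays no role in either the locality reduction or the double-counting argument. Concretely, I would read $Count(H,G)$ as the number of distinct subgraph copies of $H$ in $G$ (subgraphs of $G$ isomorphic to $H$), and $Count_{(u,v)}(H,G_v^r)$ as the number of such copies lying inside $G_v^r$ in which the key vertex $u$ is mapped onto $v$.

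First I would invoke \Cref{r_hop_neighborhood} to reduce the global count to a purely local one. Since $u$ is a center vertex of $H$ and $r$ is its radius, every vertex of $H$ lies at distance at most $r$ from $u$; hence any copy of $H$ in $G$ whose $u$-image sits at $v$ is contained entirely in the $r$-hop neighbourhood $G_v^r$. Therefore $Count_{(u,v)}(H,G) = Count_{(u,v)}(H,G_v^r)$, and no copy is lost by restricting attention to $G_v^r$.

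Next comes the double-counting step, which is the core of the argument. Fix one subgraph copy $S \subseteq G$ with $S \cong H$ and fix a single isomorphism $\phi_0 \colon H \to S$. Every isomorphism $H \to S$ then has the form $\phi_0 \circ \sigma$ for a unique $\sigma \in Aut(H)$, so the set of vertices of $S$ that can serve as the image of $u$ is precisely $\phi_0(Orbit_H(u))$, a set of cardinality $|Orbit_H(u)|$. Consequently, when we form $\sum_{v \in V(G)} Count_{(u,v)}(H,G_v^r)$, the copy $S$ is registered exactly once for each admissible position of $u$, that is, exactly $|Orbit_H(u)|$ times. Summing over all copies and dividing by $|Orbit_H(u)|$ yields the claimed identity.

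The step I expect to need the most care is this orbit count: one must check that the overcounting factor is $|Orbit_H(u)|$ and not $|Aut(H)|$ or $|Stab_{Aut(H)}(u)|$. This is exactly where the convention for what $Count_{(u,v)}$ enumerates matters — counting distinct subgraph copies (rather than embeddings) makes the overcount equal to $|Orbit_H(u)|$ via the coset description of the isomorphisms $H \to S$. Were one instead to count embeddings, the orbit–stabilizer relation $|Orbit_H(u)| \cdot |Stab_{Aut(H)}(u)| = |Aut(H)|$ would have to be folded in, so fixing the convention at the outset is essential to keeping the factor clean.
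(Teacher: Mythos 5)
Your proposal is correct and follows essentially the same route as the paper: a locality reduction via \Cref{r_hop_neighborhood} followed by a double-counting argument showing each fixed subgraph copy is registered exactly $|Orbit_H(u)|$ times. Your coset formulation ($\phi_0 \circ \sigma$ for $\sigma \in Aut(H)$) is simply a cleaner packaging of the paper's two-sided argument, which separately shows the count is at least $|Orbit_H(u)|$ (via automorphisms moving $u$ within its orbit) and at most $|Orbit_H(u)|$ (by contradiction for $x \notin Orbit_H(u)$).
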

\begin{proof}\label{proof:local_subgraph}
    Suppose $H$ appears as subgraph in $G$. Then, there must exist an injective function matching $u\in V(H)$ to some $v\in V(G)$. Thus, counting locally, we can easily see that every subgraph would be counted. Now to prove \ref{eqn2}, it is sufficient to show that for a given subgraph it is counted exactly $|Orbit_H(u)|$ many times. Note that two subgraphs are same if and only if their vertex sets and edge sets are same. We fix the vertex set and edge set in $G$, which isomorphic to $H$. Now, consider an automorphism of $H$ which maps $u$ to one of the vertices $u'$ in its orbit. Note that we can easily find the updated isomorphic function that maps $u'$ to $v$. Now, the number of choices of such $u'$ is exactly $|Orbit_H(u)|$. Thus, the same subgraph is counted at least $|Orbit_H(u)|$ many times. Suppose $x\in V(H)$ is a vertex such that $x\notin Orbit_H(u)$. Considering the fixed vertex set and edge set, if we can find an isomorphism, then it is a contradiction to the assumption that $x\notin Orbit_H(u)$. Thus, the same subgraph is counted exactly $|Orbit_H(u)|$ many times.
\end{proof}

% Using \ref{spasm_pattern} and \ref{local_subgraph}, one can easily see that counting subgraph locally is sufficient and for counting pattern $H$ appearing as subgraph it is sufficient to run Local $k-$WL on the local substructure.
Using \ref{spasm_pattern} and \ref{local_subgraph}, one can easily see that for counting pattern $H$ as a subgraph, it is sufficient to run Local $k-$WL on the local substructure and count the subgraph locally.
\begin{restatable}{theorem}{thseven}
      Local $k-$WL can exactly count any subgraph $H$ if $htw(H-v)\leq k$.
\end{restatable}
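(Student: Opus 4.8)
The plan is to reduce the global count of $H$ to a collection of local, rooted counts and then to argue that each rooted count is already encoded in the stable coloring that Local $k$-WL stores at every vertex. Throughout, let $u$ be the key vertex of $H$ (so the hypothesis is $htw(H-u)\le k$, with $H-u$ being the pattern written $H-v$ in the statement), and let $v$ range over the vertices of $G$ at which we root $G_v^r$. By \Cref{local_subgraph} and \Cref{r_hop_neighborhood}, it suffices to show that running $k$-WL on the distance-coloured subgraph $G_v^r$ determines $Count_{(u,v)}(H,G_v^r)$, the number of copies of $H$ whose key vertex is mapped to $v$. Aggregating these local counts over all $v$ and normalizing by $|Orbit_H(u)|$ then recovers $Count(H,G)$, so that any two Local $k$-WL equivalent graphs must agree on $Count(H,G)$, which is exactly the assertion that Local $k$-WL can count $H$.

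First I would use the observation already made in the proof of \Cref{local_wl_power}, namely that running $k$-WL on $G_v^r$ with vertices coloured by their distance from $v$ is exactly as powerful as individualizing $v$ and then running $k$-WL. Individualizing $v$ converts the rooted count into an unrooted problem on $H-u$: once $v$ is a named constant, a copy of $H$ rooted at $v$ is precisely a copy of $H-u$ in $G_v^r$ in which every vertex adjacent to $u$ in $H$ is mapped to a neighbour of $v$. Crucially, the distance-$1$ colour of $G_v^r$ marks exactly the neighbours of $v$, so this adjacency requirement is available to $k$-WL as a unary colour predicate. Hence counting $H$ rooted at $v$ becomes a \emph{coloured} subgraph-counting problem for the pattern $H-u$.

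Next I would invoke the spasm decomposition \Cref{eq:subcount} in its coloured form: the coloured subgraph count of $H-u$ is a fixed rational combination $\sum_{F}\alpha_F\,Hom(F,\cdot)$, where $F$ ranges over the coloured homomorphic images of $H-u$. Since identifying vertices can only decrease treewidth, and the colours only restrict which identifications are permitted, every such $F$ has $tw(F)\le htw(H-u)\le k$. By \Cref{wl_invariant}, applied to coloured graphs, $k$-WL preserves $Hom(F,\cdot)$ for every $F$ with $tw(F)\le k$, so the stable colouring produced by Local $k$-WL on $G_v^r$ fixes each term of the combination and therefore determines $Count_{(u,v)}(H,G_v^r)$. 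Combining this with the aggregation identity of \Cref{local_subgraph} closes the argument.

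The main obstacle, and the step deserving the most care, is the passage from $htw(H)$ to $htw(H-u)$: I must argue that pinning the single key vertex $u$ to the individualized vertex $v$ genuinely removes $u$ from the treewidth bookkeeping, rather than merely fixing its colour class. The point is that $u$ is mapped to one specific vertex, not to an entire colour class, so it need not appear in the tree decompositions that govern the homomorphism counts; the relevant treewidth is then that of the homomorphic images of $H-u$ and not of $H$ (this is precisely what lets Local $k$-WL count, e.g., $K_{k+2}$, for which the global bound of \Cref{spasm_pattern} would demand $(k+1)$-WL). I would also need to state explicitly that the coloured versions of \Cref{eq:subcount} and \Cref{wl_invariant} hold with respect to the distance-based colouring of $G_v^r$; this is standard but should be recorded so that the unary ``neighbour-of-$v$'' predicate is legitimately visible to $k$-WL.
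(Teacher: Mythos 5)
Your proof is correct and rests on the same two pillars the paper itself invokes --- the local aggregation identity of \Cref{local_subgraph} and the homomorphism-count machinery of \Cref{eq:subcount} together with \Cref{wl_invariant} --- but it is genuinely more complete than what the paper records. The paper's entire justification is the sentence preceding the theorem (``Using \Cref{spasm_pattern} and \Cref{local_subgraph}, one can easily see \ldots''), and taken literally those two results only yield the weaker hypothesis $htw(H)\leq k$, since \Cref{spasm_pattern} is stated for the whole pattern $H$. The step you single out as the main obstacle --- that the distance colouring of $G_v^r$ is equivalent to individualizing the root (as observed in the proof of \Cref{local_wl_power}), which pins $u$ to a constant and converts the rooted count into a \emph{coloured} subgraph count of $H-u$, whose coloured spasm involves only graphs of treewidth at most $htw(H-u)$ --- is precisely the missing argument that lets the bound drop from $htw(H)$ to $htw(H-v)$, and your $K_{k+2}$ example correctly shows the distinction has content. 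Two cautions: the clause ``identifying vertices can only decrease treewidth'' is false in general (homomorphic images can have strictly \emph{larger} treewidth than the pattern, which is the entire reason hereditary treewidth is defined, e.g.\ the spasm of a matching contains cliques), but your conclusion $tw(F)\leq htw(H-u)$ survives because the underlying graph of every coloured homomorphic image of $H-u$ is an ordinary homomorphic image of $H-u$, so the bound follows from the definition of $htw$ as a maximum over the full spasm; and, as you note yourself, the coloured analogues of \Cref{eq:subcount} and \Cref{wl_invariant} must be stated explicitly, with pattern vertices of $H-u$ not adjacent to $u$ handled by summing over the finitely many admissible distance colours (subgraph counting places no constraint on them), and with injectivity away from $v$ enforced by requiring images of colour at least one.
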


The upper bound on the choice of $k$ for running $k-$WL can be improved from the default $|H| -2$ bound that we used for the induced subgraph count. The value of $k$ is now upper bound by $htw(H)$.
%{\em hereditary treewidth} of $H$~ \citep{arvind2020weisfeiler}. 
Hence, we pick the minimum $k$ based on the local substructure of $G$ as well as the hereditary treewidth of pattern $H$ for computing the subgraph count of $H$ in $G$. The algorithm for counting the subgraph is similar to the induced subgraph.

\begin{corollary}
\label{1-wl_subgraph}
    Local $1-$WL can exactly count the number of patterns $P$ or $P-v$ appearing as a subgraph, when $P$ or $P-v,$ with a dominating vertex $v$, is $K_{1,s}$ and $2K_2.$
\end{corollary}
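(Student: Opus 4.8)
The plan is to derive this corollary directly from the immediately preceding theorem, which asserts that Local $1-$WL exactly counts a subgraph $H$ whenever $htw(H-v)\le 1$, with $v$ the key vertex. Since a dominating vertex is adjacent to every other vertex, it has eccentricity one and is the natural choice of key/center vertex, so $P$ has radius one and counting $P$ reduces to a computation in the $1$-hop neighborhood $G_v^1$ (invoking \Cref{r_hop_neighborhood}). Thus it suffices to show that the residual pattern $P-v$, which by hypothesis equals $K_{1,s}$ or $2K_2$, has hereditary treewidth at most $1$; the conclusion then follows by applying the theorem with $k=1$ and aggregating the local counts via \Cref{local_subgraph}. (When $P$ itself is $K_{1,s}$, the dominating vertex is the star center; $2K_2$ has no dominating vertex, so there the relevant $P$ is the augmentation of $2K_2$ by a universal vertex, and $P-v=2K_2$.)

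Recall that $htw(F)$ is the maximum treewidth over all homomorphic images of $F$ (its spasm), and that a graph has treewidth at most $1$ if and only if it is a forest. Hence the entire task reduces to verifying that every homomorphic image of $K_{1,s}$ and of $2K_2$ is acyclic. First I would handle $K_{1,s}$: every edge is incident to the center $c$, so any homomorphism $f$ sends each edge $\{c,\ell_i\}$ to an edge $\{f(c),f(\ell_i)\}$ incident to $f(c)$, and the image is again a star $K_{1,t}$ with $t\le s$ (colliding leaves merely reduce $t$). A star is a tree, so its treewidth is $1$ and $htw(K_{1,s})=1$.

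For $2K_2$, writing the two independent edges as $\{a,b\}$ and $\{c,d\}$, any homomorphism must keep $f(a)\ne f(b)$ and $f(c)\ne f(d)$; enumerating the admissible identifications of the four endpoints yields only $2K_2$, the path $P_3$, and the single edge $K_2$. More cleanly, the image contains at most two edges, and two edges cannot close a cycle, so every image is a forest and $htw(2K_2)=1$.

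The one point requiring care — and the only real obstacle — is the spasm computation, i.e.\ ensuring that no identification of vertices silently produces a cycle, which would push the treewidth above $1$ and invalidate the argument. For $K_{1,s}$ this is controlled by the star being ``edge‑centered'' (all edges share the center), and for $2K_2$ by the two‑edge cap on every image. With $htw(K_{1,s})\le 1$ and $htw(2K_2)\le 1$ established, the preceding theorem applies with $k=1$, and \Cref{local_subgraph} assembles the local counts into the global count, proving that Local $1-$WL exactly counts $P$ (equivalently $P-v$) in each case.
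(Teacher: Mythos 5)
Your proposal is correct and follows essentially the same route as the paper: both arguments reduce the corollary to showing $htw(K_{1,s})\le 1$ and $htw(2K_2)\le 1$ by observing that every homomorphic image of a star is a smaller star and every homomorphic image of $2K_2$ is $2K_2$, a path, or an edge — all forests — and then invoke the preceding theorem with $k=1$. Your spasm enumeration for $2K_2$ (explicitly including the single-edge image $K_2$) is in fact slightly more careful than the paper's one-line version, but it is the same proof.
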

\begin{proof}
   In a star, $K_{1,s},$ all the leaves are mutually independent. By the definition of homomorphism, the edges are preserved in homomorphic images. So, the only possibility of a homomorphic image of the star is getting another star with less number of leaves. Note that the star is a tree, and its treewidth is one. Also, for $2K_2$, the homomorphic image is either itself or the star. So, the treewidth of all its homomorphic images is $1.$
\end{proof}
\begin{corollary}
\label{c_4 subgraph count}
    Local $1-$WL can exactly count the number of $C_4$ appearing as a subgraph.
\end{corollary}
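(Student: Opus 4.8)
The plan is to invoke the preceding theorem, that Local $k$-WL exactly counts a subgraph $H$ whenever $htw(H-v)\le k$, and to verify its hypothesis for $H=C_4$ with $k=1$. First I would note that $C_4$ is vertex-transitive, so every choice of key vertex $v$ is equivalent and deleting $v$ always leaves the path $P_3$ on three vertices. It therefore suffices to establish $htw(C_4-v)=htw(P_3)\le 1$.

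To bound $htw(P_3)$ I would enumerate its spasm, i.e.\ all homomorphic images of $P_3=a\!-\!b\!-\!c$. Each image is a quotient of $\{a,b,c\}$ in which merged vertices are pairwise non-adjacent, and two blocks are joined by an edge exactly when some edge of $P_3$ crosses them. The only admissible nontrivial merge identifies the endpoints $a$ and $c$ (which are non-adjacent), yielding the single edge $K_2$; the vertex $b$ cannot be merged with either endpoint since $ab,bc\in E(P_3)$. Hence the spasm is $\{P_3,K_2\}$, both of treewidth $1$, so $htw(P_3)=1$.

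With $htw(C_4-v)=1\le 1$ in hand, the preceding theorem gives $Count_{(u,v)}(C_4,G_v^r)$ exactly from Local $1$-WL on each $r$-hop neighborhood, and \Cref{local_subgraph} aggregates these local counts (dividing by $|Orbit_{C_4}(u)|$) into the global count $Count(C_4,G)$.

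The step I would check most carefully is the treewidth of every homomorphic image: one must confirm that no quotient of $P_3$ contains a cycle. This is the only place the argument could slip, but it holds because the quotient introduces an edge between two blocks only when $P_3$ already has an edge across them. Since $P_3$ has just the two edges meeting at $b$, identifying $a$ with $c$ collapses both onto the single pair $\{a,c\}\!-\!b$ rather than creating a triangle or longer cycle, so every image stays a forest and $k=1$ is genuinely sufficient.
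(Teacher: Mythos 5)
Your proposal is correct and follows essentially the same route as the paper: the paper also observes that $C_4-v$ is $K_{1,2}$ (i.e., $P_3$) for any choice of key vertex, applies the theorem requiring $htw(H-v)\le k$, and aggregates via \Cref{local_subgraph} with orbit size $4$. The only difference is that you verify $htw(P_3)=1$ by explicitly enumerating the spasm $\{P_3,K_2\}$, whereas the paper relies on its earlier corollary that stars have hereditary treewidth $1$; your explicit check is sound and arguably more self-contained.
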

\begin{proof}
    We can see that $H-v$ is $K_{1,2}$ when choosing any vertex as a key vertex. Also, the orbit size is $4$. So, we can directly use Lemma 5 to compute the count of the $C_4$ in the graph locally and then sum it over all the vertices and divide it by $4$.
    % \ref{local_subgraph} Lemma 5
\end{proof}
\begin{corollary}
\label{neural_comparison_wl}
Local $1-$WL can exactly count patterns appearing as subgraphs of (a) 3-star (b) triangle (c) tailed triangle (d) chordal cycle (e)attributed triangle and patterns appearing as induced subgraphs of (a) triangle and (c) attributed triangle.
\end{corollary}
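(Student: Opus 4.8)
The plan is to reduce every item to the subgraph-counting guarantee that Local $k$-WL exactly counts a pattern $H$ whenever $htw(H-v)\le k$ (the theorem immediately preceding \Cref{1-wl_subgraph}), specialized to $k=1$, together with the aggregation identity of \Cref{local_subgraph}. So for each listed pattern I would first select a key vertex $v$ and then argue that $htw(H-v)\le 1$, i.e.\ that every homomorphic image of $H-v$ is a forest. Once this is established, the theorem gives the local count and \Cref{local_subgraph} assembles it into the global count.

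For the five subgraph patterns I would take $v$ to be a center, which in each case is a dominating vertex of radius one, so that $H-v$ collapses to a small forest. Removing the center of the $3$-star leaves three isolated vertices; removing any vertex of the triangle (or attributed triangle) leaves $K_2$; removing the degree-three vertex of the tailed triangle leaves an edge together with an isolated vertex; and removing a chord endpoint of the chordal cycle leaves the path $P_3$. The decisive step is to verify $htw$ rather than mere treewidth: since hereditary treewidth is the maximum treewidth taken over all homomorphic images, I must check that identifying non-adjacent vertices of these graphs cannot create a cycle. Because each $H-v$ here has at most three vertices, its only homomorphic images are among $K_2$, $P_3$, $K_{1,2}$ and independent sets, all of which are forests, so $htw(H-v)\le 1$ and the theorem applies. (For the $3$-star this is already subsumed by \Cref{1-wl_subgraph}, as the star is $K_{1,s}$.) The attributed case is identical once I note that vertex and edge colors leave treewidth unchanged and that Local $1$-WL runs on colored graphs.

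For the induced-subgraph claims I would restrict to the two size-three patterns, the triangle and the attributed triangle, and invoke \Cref{induced_subgraph_general} with $k=1$, which guarantees that Local $1$-WL counts all induced subgraphs of size at most $k+2=3$. Equivalently, since $K_3$ is the unique graph on three vertices containing a triangle, every subgraph occurrence of a triangle is automatically induced, so $IndCount=Count$ and the subgraph result transfers verbatim; the same holds for the attributed triangle. This also explains why the induced claim is confined to the size-three patterns: the tailed triangle, chordal cycle and $3$-star have four vertices, beyond the $k+2$ bound, and their subgraph and induced-subgraph counts genuinely differ.

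The main obstacle I anticipate is precisely this hereditary-treewidth verification. It is tempting to argue that $H-v$ is a forest and stop, but homomorphic images of forests need not be forests: identifying the two endpoints of the path $P_4$ already yields a triangle of treewidth two. The argument goes through only because choosing $v$ to be a dominating vertex forces $H-v$ to be $P_4$-free (indeed to have at most three vertices) in every one of the five cases, so no such cycle-creating identification exists. Making this smallness, and hence the $P_4$-freeness, explicit for each pattern is the crux of the proof; the remaining bookkeeping through \Cref{local_subgraph} and \Cref{induced_subgraph_general} is routine.
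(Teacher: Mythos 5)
Your proposal is correct and takes essentially the same route as the paper: both proofs reduce the five subgraph cases to the theorem that Local $k$-WL exactly counts $H$ when $htw(H-v)\le k$ (with $k=1$ and a dominating vertex as key, aggregated via \Cref{local_subgraph}), and both settle the induced cases by observing that for the triangle and attributed triangle the induced and subgraph counts coincide. You additionally spell out the verification $htw(P-v)=1$ that the paper merely asserts, and your size-at-most-three argument for it is sound — though note that the $P_4$-freeness framing alone would not suffice for larger disconnected forests (e.g., $P_3\cup K_2$ has a triangle as a homomorphic image); it is the three-vertex bound that does the work.
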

\begin{proof}
For subgraph counting, we can see that for all of the $5$ patterns, there exists a vertex $v$ such that $htw(P-v)=1.$ One can note that the attributed triangle can also be handled using Corollary 1. Since every pattern has a dominating vertex, running $1-$WL on the subgraph induced on the neighbourhood is sufficient.
% \ref{attribute_patterns}
Now, we only have to argue for the patterns appearing as induced subgraphs. Note that the induced subgraph count of the triangle and the attributed triangle is same as the subgraph count of the triangle and attributed triangle. 
\end{proof}

Note that all the subgraph or induced subgraph counting can be easily extended to attributed subgraph or attributed-induced subgraph counting (graph motif). We will be given a coloured graph as an input, and we will incorporate those colours and apply a similar technique as described above to get the subgraph count. 
\begin{restatable}{corollary}{cone}
    \label{attribute_patterns}
     If $C(G)=C(H),$ where $C(.)$ is the color histogram, then $Count(P,G)=Count(P,H)$ where $P$ is the attributed subgraph.
\end{restatable}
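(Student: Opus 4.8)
The plan is to reduce the attributed (graph-motif) case to the uncoloured case already settled in \Cref{spasm_pattern} and \Cref{local_subgraph}, by folding the vertex colours of $P$, $G$ and $H$ into the initialisation of the WL procedure and checking that every step of the uncoloured argument survives when ``edge-preserving'' is everywhere replaced by ``edge- and colour-preserving.'' Concretely, I would first define the coloured homomorphism count $Hom_c(F,G)$ as the number of maps $V(F)\to V(G)$ that preserve both adjacency and vertex colour, and then establish the coloured analogue of \Cref{eq:subcount}, namely $Count(P,G)=\sum_{P'}\alpha_{P'}\,Hom_c(P',G)$, where $P'$ now ranges over the \emph{coloured spasm} of $P$ --- the quotients of $P$ obtained by identifying only vertices that carry the same colour. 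The inclusion--exclusion that produces the coefficients $\alpha_{P'}$ in the uncoloured setting goes through verbatim, because identifying two differently-coloured vertices can never occur in a coloured homomorphism and hence contributes nothing to the expansion.

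Next I would invoke the coloured version of \Cref{wl_invariant}. Running $k-$WL on a vertex-coloured graph simply starts the $k$-tuple colouring from a finer initial partition (the one induced by the given colours together with the isomorphism type), so the standard equivalence ``$k-$WL indistinguishable $\iff$ equal homomorphism counts for all $F$ with $tw(F)\le k$'' lifts to ``equal coloured stable histograms $\iff$ equal $Hom_c(F,\cdot)$ for all such $F$.'' Reading the hypothesis $C(G)=C(H)$ as equality of the stable colour histograms produced by the appropriate coloured $k-$WL (with $k\ge htw(P)$, or the corresponding local variant via \Cref{local_subgraph}), this yields $Hom_c(P',G)=Hom_c(P',H)$ for every $P'$ in the coloured spasm of $P$, since each such $P'$ is a quotient of $P$ and therefore has treewidth at most $htw(P)\le k$.

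Combining the two facts by linearity gives $Count(P,G)=\sum_{P'}\alpha_{P'}Hom_c(P',G)=\sum_{P'}\alpha_{P'}Hom_c(P',H)=Count(P,H)$, which is the claim. The main obstacle I anticipate is the bookkeeping in the first step: one has to verify that the coloured spasm is exactly the set of quotients respecting colour classes and that no homomorphic image arising from a forbidden identification enters the sum, so that the treewidth bound controlling $k$ remains the same $htw$ as in the uncoloured case. Once the coloured homomorphism expansion is pinned down, the WL-invariance and the final linear combination are routine, and the local aggregation of \Cref{local_subgraph} transfers unchanged, noting only that $|Orbit_P(u)|$ must now be taken in the colour-preserving automorphism group of $P$, which can only shrink the orbit and does not affect the counting identity.
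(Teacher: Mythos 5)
Your proposal is correct and takes essentially the same route the paper intends: the paper treats \Cref{attribute_patterns} as an immediate extension obtained by folding the given vertex colours into the initial WL colouring and re-running the argument of \Cref{eq:subcount} and \Cref{wl_invariant}, which is exactly your colour-preserving homomorphism expansion over the coloured spasm combined with the coloured WL--homomorphism equivalence. Your extra bookkeeping (restricting the spasm to colour-respecting quotients, noting the treewidth bound is unchanged, and taking the orbit in \Cref{local_subgraph} within the colour-preserving automorphism group) simply fills in details the paper leaves implicit.
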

In particular, for counting the number of triangles, we can see that it is enough to count the number of edges in the subgraph induced on the neighbourhood of the vertices. Thus, Local $1-$WL can give the exact count of the number of triangles. For more details, please see \ref{counting_triangle}.
The running time of $1-$WL depends on the number of iterations, $h$. In general, it takes $O((n+m)\log n)$ time, where $m$ is the number of edges, whereas when we look at it in terms of iteration number it requires $O(nh)$ time.

\begin{restatable}{lemma}{lsix}
    It requires $O(n)$ time to guarantee the count of patterns which can be written using $2-$variable with a counting quantifier where the depth of the quantifier is constant.
\end{restatable}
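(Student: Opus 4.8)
The plan is to reduce the claim to the running-time bound for a bounded number of color-refinement iterations, using the logic-WL correspondence already established in the excerpt. The key observation is that $2$-variable counting logic with a counting quantifier is exactly $C^2$, which by the equivalence between $k$-WL and $C^{k+1}$ (specialized to $k=1$, as in \Cref{pebble_wl_relation} and \Cref{wl_invariant}) is precisely the expressive power of $1$-WL. Thus any pattern whose count is expressible by a $C^2$ formula is one whose count is invariant across $1$-WL equivalent graphs, and can therefore be read off from the $1$-WL coloring.

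First I would invoke \Cref{Iteration_number_logic} with $k=1$: the color of a vertex after running $i$ iterations of $1$-WL encodes exactly the $C^2$ formulae of quantifier depth at most $i$. Hence, if the pattern count is witnessed by a $C^2$ formula whose quantifier depth is a constant $d$, then running $1$-WL for $h = d$ iterations is sufficient to determine the count — there is no need to iterate until the coloring stabilizes, which in the worst case could require $\Theta(n)$ rounds. This is the crucial step that converts a statement about logical expressibility into a statement about how many refinement rounds are actually needed.

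Second, I would apply the running-time estimate stated in \Cref{WL_SECTION}: when $1$-WL is halted after $h$ iterations rather than run to stability, the total work is $O(nh)$. Substituting the constant $h = d$ gives $O(nd) = O(n)$, which is the claimed bound. The count itself is then obtained by aggregating the relevant color-class sizes (as in the proofs of \Cref{local_induced} and \Cref{local_subgraph}), an $O(n)$ post-processing step that does not affect the asymptotics.

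The main obstacle I anticipate is the bookkeeping needed to argue that \emph{counting} the pattern (not merely distinguishing two graphs) is captured by the depth-$d$ coloring. Concretely, one must verify that the $C^2$ formula expressing the count can be evaluated from the multiset of stabilized-after-$d$-iterations colors, so that the aggregation step genuinely recovers the occurrence count rather than just an isomorphism-invariant signature. This follows from the equivalence in \Cref{Iteration_number_logic}, but stating it carefully — in particular confirming that the counting quantifiers of bounded depth correspond to reading off bounded-depth color refinements — is where the argument requires care.
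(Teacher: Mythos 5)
Your proposal matches the paper's own argument, which is given only as a sketch: the paper likewise invokes the equivalence between quantifier depth of $C^{2}$ and the iteration number of $1$-WL (\Cref{Iteration_number_logic}) to conclude that a constant number $h$ of refinement rounds suffices, and then applies the stated $O(nh)$ running time for $1$-WL halted after $h$ iterations. Your additional care about recovering the actual count from the depth-$d$ coloring via the orbit-aggregation lemmas is a sound elaboration of what the paper leaves implicit, not a different route.
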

For more details, please see the equivalence between the number of iterations and quantifier depth in \ref{Iteration_number_logic}. A list of patterns that can be counted as subgraph and induced subgraphs using local $1-$WL and local $2-$WL are mentioned in \ref{Corollary Table}. Also, the patterns including 3-star, triangle, chordal $4-$cycle (chordal $C_4$), and attributed triangle have been studied in \citep{chen2020can} and have been shown that it cannot be counted by $1-$WL.
\begin{table}[ht]
\captionsetup{font={small}} % sets caption font size to 9 (small = 9pt in most classes)
\caption{\label{Corollary Table}List of all the patterns that can be counted exactly (as a subgraph or induced subgraph), given $G$, using Local $k$-WL, for different $k$.}
\vspace{11pt}
\resizebox{\textwidth}{!}{
\begin{tabular}{llllll}
\toprule
\textbf{Restriction on $G_v^r$} & \textbf{k} & \textbf{Patterns, $H$} & \textbf{Induced subgraph} & \textbf{Subgraph} & \textbf{Reference} \\
\midrule
$G_v^r$ is amenable & 1 & All & \checkmark & \checkmark & Corollary \ref{Corollary 1} \\
Max degree $\leq 5$ & 1 & Patterns with a dominating vertex & \checkmark & \checkmark & Corollary \ref{Corollary 2} \\
Max degree $\leq 15$ & 2 & Pattern with a dominating vertex & \checkmark & \checkmark & Corollary \ref{Corollary 2} \\
No restriction & 2 & 3-star, triangle, tailed triangle, chordal cycle, attributed triangle & \checkmark & \checkmark & Corollary \ref{Corollary 3} \\
No restriction & 1 & Either $H$ or $H-v$ is $K_{1,s}$ or $2K_2,$ where $v$ is the dominating vertex &  & \checkmark & Corollary \ref{1-wl_subgraph} \\
No restriction & 1 & $C_4$ &  & \checkmark & Corollary \ref{c_4 subgraph count} \\
No restriction & 1 & 3-star, tailed triangle, chordal cycle &  & \checkmark & Corollary \ref{neural_comparison_wl} \\
No restriction & 1 & triangle, attributed triangle & \checkmark & \checkmark & Corollary \ref{neural_comparison_wl} \\
\bottomrule
\end{tabular}
}
\end{table}

\subsection{Algorithms for subgraph counting}
\subsubsection{Triangle counting in the host graph}\label{counting_triangle}

We describe an algorithm for counting the number of triangles in a given host graph $G$ in \ref{alg:triangle_count}. Note that counting the number of triangles is the same as counting the number of edges in the subgraph induced by $ N_G(v).$ It is well known that two $1-$WL equivalent graphs have the same number of edges. This ensures that if we run $1-$WL on the induced subgraphs in the neighbourhood of $v$, taking colour as a feature, we can guarantee the count of the triangles. On the other hand, we can see that running $1-$ WL on graph $G$ will not guarantee the count of triangles. Running $1-$WL on the entire graph takes $O(n+m)\log (n)$ and $O(n)$ space, where $m$ is the number of edges. Thus, running $1-$WL locally in the neighbourhood is more space and time-efficient. Note that the running time is also dependent on the number of iterations, $h.$ Running $1-$WL for $h-$ iteration requires $O(nh)$ time. The quantifier depth of counting logic with $(k+1)$ variables is equivalent to the number of iterations of $k-$WL ( See \ref{Iteration_number_logic}). For the case of triangle counting, we just need to count the number of edges, which can be done by running just one iteration of $1-$WL. So, the time required is $O(deg(v))$ for each $v.$ This can be done in parallel for each vertex.
\begin{algorithm}[ht]
\caption{Counting the number of triangles}
\label{alg:triangle_count}
\begin{algorithmic}[1]
\State Let $G$ be the host graph.
\State $num\_edges=0$ 
\For{each vertex $v$ in $V(G)$} \label{algtri:line2}
        \State Find the induced subgraph on $N_G(v)$
        \State Find the number of edges in the induced subgraph on $N_G(v)$
        \State Add it to $num\_edges$
    \EndFor
\State $c=\frac{num\_edges}{3}$
\State \textbf{Output}: The number of triangles in graph $G$ is $c$.
\end{algorithmic}
\end{algorithm}

\subsubsection{Counting subgraph of radius one}
\label{radius_1}
%\subsection{Counting patterns of radius one}

We begin by explaining the procedure for counting the number of subgraphs having a dominating vertex (radius one). For this purpose, we fix a dominating vertex $u$. If a subgraph exists, then the dominating vertex must be mapped to some vertex. We iteratively map the dominating vertex to each vertex in the host graph and count the number of patterns in the neighbourhood of the dominating vertex.

Here, we present an algorithm for counting patterns of radius one \ref{alg:count_radius 1}. Note that running $k-$WL on the entire graph takes $O(k^2\cdot n^{k+1} \log n)$ time and $O(n^k)$ space, whereas when we run locally, it requires less time and space. Suppose we run only on the neighbourhood of each vertex. Then, it requires $\sum_{v\in V(G)} (deg(v))^{k+1}\log(deg(v))$ and space $O(max_i(deg(v_i))^{k}+n).$ More specifically, suppose the given graph is $r-$regular. Then it requires $O(r^{k+1}\log(r) n)$ time and $O(r^k +n)$ space. Therefore, if the graph is sparse, then we can implement Local $k-$WL for a larger value of $k$. We can see that running $k-$WL locally is not dependent on the size of the graph exponentially. However, it is not feasible to run $k-$WL on the entire graph for a larger value of $k.$

\begin{algorithm}[ht]
\caption{Counting the number of patterns of radius one}
\label{alg:count_radius 1}
\begin{algorithmic}[1]
\State Let $H$ be a pattern having a dominating vertex and $G$ be the host graph.

\For{each vertex $v$ in $V(G)$} \label{algrad1:line2}
        \State Find the induced subgraph on $N_G(v)$
        \If{degree$(v)+1<|V(H)|$}
            \State{skip this iteration}
        \EndIf
        \State run $k-$WL on the induced subgraph on $N_G(v)$
        \State Calculate $Count_{u,v}(H,G_v^r)$ 
    \EndFor
    \State \Return $\frac{\sum_{v\in V(G)}Count_{u,v}(H,G_v^r)}{|Orbit_H(u)|} $
\end{algorithmic}
\end{algorithm}

\subsubsection{Counting subgraphs of radius r}

Here, in \ref{alg:count}, we describe how to count the number of subgraphs of radius $r.$ We iterate over all the vertices and take the $r-$hop neighbourhood around that vertex, say $v,$ and choose a suitable $k$ according to the structure of the pattern that can guarantee the count of subgraphs in the local substructure. 
\label{radius_r}
\begin{algorithm}[ht]
\caption{Counting the number of subgraphs}
\label{alg:count}
\begin{algorithmic}[1]
\State Let $P$ be a pattern having a key vertex $u \in V(P)$ and $G$ be the host graph.

\For{each vertex $v$ in $V(G)$} \label{algradr:line2}
    \State Run BFS on $G$ rooted at $v$
    \State $c=1$
    \For{each layer of the BFS tree rooted at $v$}
            \If{$\#$ vertices at layer $i$ from $v$ $<$ $\#$ vertices
           at layer $i$ from $u$}
                    \State{$c=0$}
                    \State skip this iteration
            %\EndIf
            
            \Else
                \State Color each vertex with color $i$ for layer $i$
                \State This gives a colored graph at each layer
            \EndIf
     \EndFor
     \If{$c==0$}
        \State Skip this iteration
        \State{$c=1$}
    \EndIf
    \State Find an induced subgraph on the set of vertices in the $r-$hop neighborhood
    \State Run $k-$WL on the induced subgraph
    \State Calculate $Count_{u,v}(H,G_v^r)$ 
    \EndFor
    \State \Return $\frac{\sum_{v\in V(G)}Count_{u,v}(H,G_v^r)}{|Orbit_H(u)|} $
\end{algorithmic}

\end{algorithm}

\section{Methodology}
\label{sec:model}
This section describes our unified subgraph counting strategy, combining localized Weisfeiler-Leman (WL) algorithms, differentiable learning, and a novel fragmentation approach. The framework is broken into three primary components: \textbf{(A) Pattern Learning}, \textbf{(B) Local Count Learning}, and \textbf{(C) Global Count Learning}, as illustrated in Figure~\ref{fig:model}.

\begin{figure}[ht]
    \centering
    \includegraphics[width=\textwidth, height=125pt]{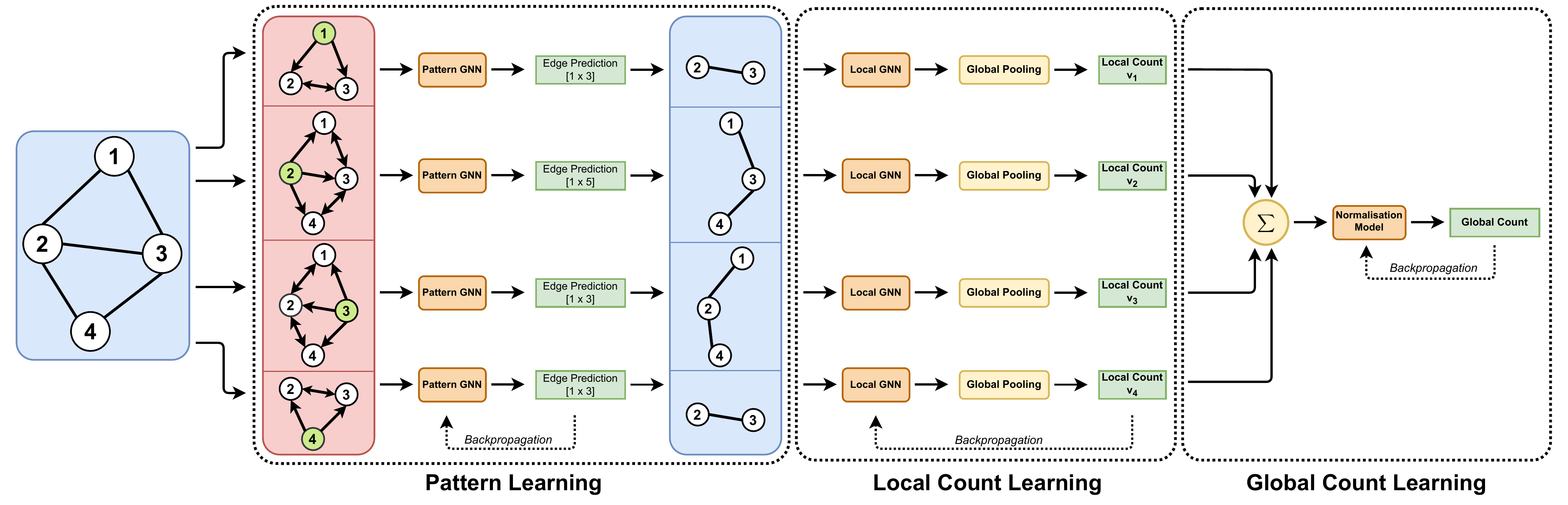}
    \caption{\fontsize{9}{9}\selectfont{Overview of the framework. 
    \textbf{(A) Pattern Learning:} Each node's $r$-hop neighbourhood, with directed edges marking the root, is input to a model that predicts relevant subpatterns. 
    \textbf{(B) Local Count Learning:} Using predicted patterns, the model estimates subpattern counts per root node.
    \textbf{(C) Global Count Learning:} The total count is normalized to adjust for possible overcounting.}}
    \label{fig:model}
\end{figure}

\subsection{Pattern Learning}

For a given subgraph pattern $P$ (e.g., triangle, $3$-star, or tailed triangle), the local subgraph $G_v^r$ centered at node $v$ must be preprocessed to match $P$'s structure. We augment $G_v^r$ by directing edges outward from $v$ to clearly indicate the root. Modified $G_v^r$ is passed to a GNN, which produces node and edge embeddings. These embeddings are used to classify edges as included/excluded based on their relevance to the target pattern, yielding a pruned subgraph suited as input to the next component.

\subsection{Local Count Learning}

Each root node's updated local subgraph (from Pattern Learning) is used to estimate the "local count"—the number of occurrences of a subpattern in $G_v^r$. The local counting rule depends on the motif: for triangles, count edges among neighbours; for $k$-stars, compute $\binom{d_v}{k}$ where $d_v$ is node $v$'s degree. For arbitrary patterns, ground truth local counts are generated as supervision. The model is trained to predict these counts using a GNN applied to the modified $G_v^r$.

\subsection{Global Count Learning}

Summing local counts across all roots may overcount global subgraph occurrences due to overlaps. Hence, a final normalization step is applied. We use a learnable normalization function to infer the correction factor needed to yield the true global count, enabling an end-to-end differentiable pipeline for subgraph counting.

\subsection{Fragmentation}
\label{fragmentation_sec}

For complex patterns beyond primitives like triangles or $k$-stars, we introduce the \emph{fragmentation} technique. Instead of directly counting $P$, which may be computationally intensive, we recursively decompose $P$ into simpler subpatterns ("fragments") whose counts can be efficiently determined. The total count of $P$ is obtained by suitably combining the counts of these fragments, often requiring only small values of $k$ in the local $k$-WL hierarchy.

For example, the tailed triangle can be fragmented into a pendant vertex (the key/root) and an attributed triangle, as shown in Figure~\ref{fig:tail_tri_fragmentation}. For chordal $C_4$ cycles, the pattern is broken into $2$-stars and triangles within the neighbourhood of a key vertex, adjusting counts to avoid overcounting $K_4$s. The approach applies for any pattern whose complexity can be reduced through such decomposition.

\begin{figure}[ht]
    \centering
    \includegraphics[width=0.5375\linewidth]{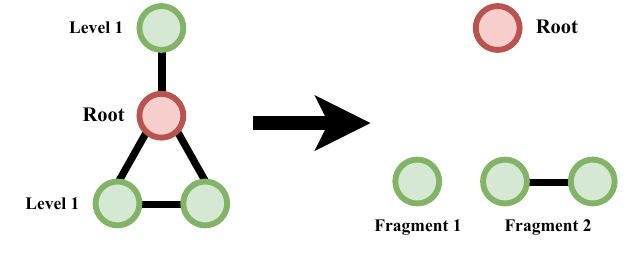}
    \caption{\fontsize{9}{9}\selectfont{Fragmentation example for a tailed triangle: identifying fragments (isolated node, edge) in the subgraph surrounding a root node.}}
    \label{fig:tail_tri_fragmentation}
\end{figure}

\paragraph{Fragmentation Algorithm:}Given a host graph $G$ and a pattern $P$ to be counted, select a key vertex in $P$ and set the radius $r$. For each node $v \in V(G)$, extract the $r$-hop neighbourhood $G_v^r$. Decompose $P$ into subpatterns $\{P_1, ..., P_l\}$, each counted by a learned model $M_i$. Store local counts in arrays, aggregate via learnable combination functions $\alpha$, $\beta$ (for subpattern and root aggregation, respectively), and finally apply a normalization function $\gamma$ to compute the global count.

\begin{algorithm}[ht]
\caption{Fragmentation Algorithm}
\label{alg:Fragmentation}
\begin{algorithmic}[1]
\Require $G$; $ \mathbb{P}$: List of subpatterns; $M^{\text{pattern}}_i$: learned model for generating pattern corresponding to $P_i\in\mathbb{P}$; $M^{\text{count}}_i$: learned model for counting $P_i\in\mathbb{P}$;
\State $a \leftarrow []$
\For{each node $v \in V(G)$}
        \State $H_v = G_v^r \setminus \{v\}$
        \State $b \leftarrow []$
        \For{each pattern $P_i \in \mathbb{P}$}
            \State $c \leftarrow []$
            \For{each node $u\in H_v$}
                \State $\mathcal{P}_u = M^{\text{pattern}}_i(H_v, u)$
                \State $c.append(M^{\text{count}}_i(\mathcal{P}_u))$
            \EndFor
            \State $b.append(\alpha(c))\qquad \#$Learnable function 
        \EndFor
        \State $a.append(\beta(b))\qquad \#$Learnable function 
\EndFor
\State $Count = \gamma(a)\qquad \#$Learnable function 
\State \Return $Count$
\end{algorithmic}
\end{algorithm}
\begin{theorem}
    \label{fragmentation-comparison}
    Using the fragmentation method, we can count induced tailed triangle, chordal $C_4$ and $3-star,$ by running Local $1-$WL.
%\text{    (Proof in \ref{Proof of Theorem 11}.)}
\end{theorem}
\begin{proof}
    For tailed triangle, we fix the pendant vertex as the key vertex (refer to \ref{fig:tail_tri_fragmentation}). Now, we have to look for the count of triangles such that exactly one of the node of triangle is adjacent to the key vertex. We find induced subgraph of $2-$hop neighborhood for each key vertex. Now, we color the vertices at distance $i$ by color $i$. Then, the problem of counting the number of tailed triangles reduces to counting the number of colored triangles in the colored graph such that one node of triangle is colored $1$ and remaining two nodes are colored $2$. We can find the count of colored triangles using $1-$WL on the induced subgraph by Corollary 1. This number of count of colored triangle is same as $IndCount_{(u,v}(tailed-triangle, G_v^r)$. Now, using \ref{local_induced}, we can say that fragmentation technique can count tailed triangles appearing as induced subgraphs using $1-$WL.
% \ref{local_induced}

    Consider the pattern, chordal $C_4.$ We have proved that $1-$WL can count the number of subgraphs of chordal $C_4.$ So, to count the number of induced subgraphs of chordal $C_4,$ we only have to eliminate the count of $K_4.$ When we fix one vertex of degree $3$ as key vertex, we can easily compute the count of $K_{1,2}$ in the neighborhood. Now, we have to eliminate all three tuples appearing as triangles in the neighborhood of the key vertex. We can easily count the number of triangles in the neighborhood of each vertex. This gives the exact count of chordal $C_4$ appearing as subgraph in the local structure. Using \ref{local_induced}, we can find $IndCount(Chordal C_4, G)$.

    Consider the pattern, $3-star$. Here, we choose any of the pendant vertex as key vertex. Now, we have to compute $K_{1,2},$ where one center vertex of the star is connected to the key vertex. We can easily count the number of colored $K_{1,2}$ in $2-$hop neighborhood of the key vertex. However, a triangle can be also be included in this count. So, we have to eliminate the $3$ tuples forming a triangle. Again, using the approach discussed above, we can count the number of colored triangles and this will output the exact count of colored induced $K_{1,2}$. Again, using lemma 3, we can find $IndCount(3-star, G).$
\end{proof}
\begin{longtable}{@{}lcl@{}}
\caption{Fragmentation for patterns for size of at most 4} \\
\toprule
\textbf{Vertices} & \textbf{Structure} & \textbf{Formula} \\ 
\midrule
\endfirsthead

\multicolumn{3}{c}%
{{\bfseries Table \thetable\ (continued)}} \\
\toprule
\textbf{Vertices} & \textbf{Structure} & \textbf{Formula} \\ 
\midrule
\endhead

\midrule
\multicolumn{3}{r}{{Continued on next page}} \\
\bottomrule
\endfoot

\bottomrule
\endlastfoot

% ---- Table content ----
\multirow{-5}{*}{G1} & \includegraphics[scale=0.75]{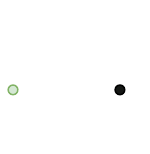} & \multirow{-5}{*}{$\binom{n}{2}-|E|$} \\
\midrule
% add many rows here
\multirow{-5}{*}{G2} & \includegraphics[scale=0.75]{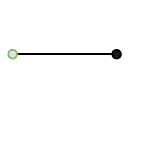} & \multirow{-5}{*}{$|E|$} \\ \midrule
\multirow{-5}{*}{G3} & \includegraphics[scale=0.75]{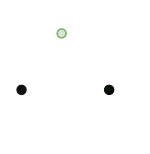} & \multirow{-5}{*}{$\frac{\sum_v IndCount_{(u,v)}(G_1, G-G[N_G[v]])}{3}$} \\ \midrule
\multirow{-5}{*}{G4} & \includegraphics[scale=0.75]{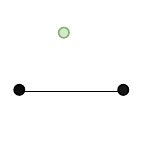} & \multirow{-5}{*}{$\sum_v IndCount(G_2, G-G[N_G[v]])$} \\ \midrule
\multirow{-5}{*}{G5} & \includegraphics[scale=0.75]{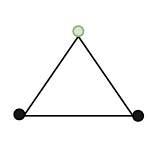} & \multirow{-5}{*}{$\frac{\sum_v IndCount_{(u,v)}(G_2, G[N_G(v)])}{3}$} \\ \midrule
\multirow{-5}{*}{G6} & \includegraphics[scale=0.75]{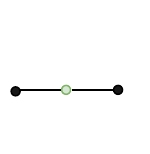} & \multirow{-5}{*}{$\sum_v \binom{degree(v)}{2}-|E(N_G(v))|$} \\ \midrule
\multirow{-5}{*}{G7} & \includegraphics[scale=0.75]{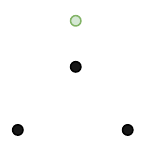} & \multirow{-5}{*}{$\frac{\sum_v IndCount_{(u,v)}(G_3, G-G[N_G(v)]}{4}$} \\ \midrule
\multirow{-5}{*}{G8} & \includegraphics[scale=0.75]{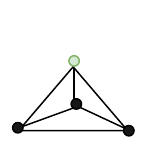} & \multirow{-5}{*}{$\frac{\sum_v IndCount_{(u,v)}(G_5, G[N_G[v]])}{4}$} \\ \midrule
\multirow{-5}{*}{G9} & \includegraphics[scale=0.75]{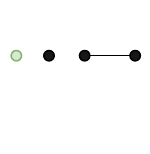} & \multirow{-5}{*}{$\frac{\sum_v IndCount_{(u,v)}(G_4, G- G[N_G[v]])}{2}$} \\ \midrule
\multirow{-5}{*}{G10} & \includegraphics[scale=0.75]{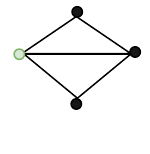} & \multirow{-5}{*}{$\frac{\sum_v (IndCount(G_6,N_G(V))- IndCount(G_5,N_G(v)))}{2}$} \\ \midrule
\multirow{-5}{*}{G11} & \includegraphics[scale=0.75]{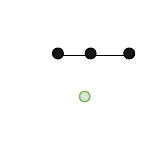} & \multirow{-5}{*}{$\sum_v (IndCount(G_6,G-G[N_G[v]])$} \\ \midrule
\multirow{-5}{*}{G12} & \includegraphics[scale=0.75]{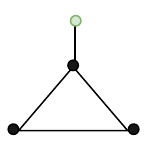} & \multirow{-5}{*}{$\sum_v Count(attribute*-triangle, G[N_G(v)\cup N_G(N_G(v))])$} \\ \midrule
\multirow{-5}{*}{G13} & \includegraphics[scale=0.75]{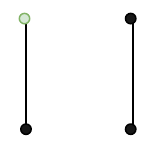} & \multirow{-5}{*}{$\binom{|E(G)|}{2} -Count(k_{1,2},G)$} \\ \midrule
\multirow{-5}{*}{G14} & \includegraphics[scale=0.75]{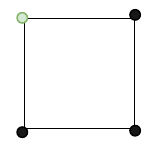} & \multirow{-5}{*}{$\frac{\sum_v IndCount(attributed-k_{1,2},G[N_G(v)\cup N_G(N_G(v))])}{4}$} \\ \midrule
\multirow{-5}{*}{G15} & \includegraphics[scale=0.75]{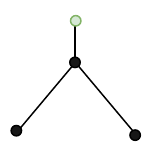} & \multirow{-5}{*}{$\sum_v IndCount(attribute-k_{1,2},G[N_G(v)\cup N_G(N_G(v))])$} \\ \midrule
\multirow{-5}{*}{G16} & \includegraphics[scale=0.75]{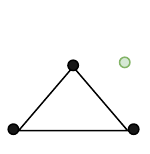} & \multirow{-5}{*}{$\frac{\sum_v (n-deg(v)-1)\cdot |E(G[N_g(v)]|}{3}$} \\ \midrule
\multirow{-7}{*}{G17} & \includegraphics[scale=0.75]{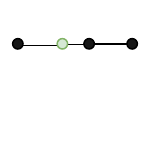} & \multirow{-7}{*}{$\frac{IndCount(attributed-G_4, G[N_G(v)\cup N_G(N_G(v))]}{2}$} \\
\end{longtable}
\label{tab:fragmentation_upto_4}
\begin{theorem}
    \label{4-sized graph}
    Using the fragmentation technique we can count all patterns appearing as induced subgraphs of size $4$ by just running Local $1-$WL.
%\text{   (Proof in \ref{Proof of Theorem 12}.)}
\end{theorem}
\begin{proof}
In \ref{tab:fragmentation_upto_4}, we describe how the fragmentation technique can be leveraged to count all the induced subgraphs of size $4$. This shows that for $k=1$, the fragmentation technique is more expressive than $S_{k+3}$. Also, we can enlist more graphs where the larger pattern can be seen as the union of fragments of smaller patterns. Using this, we can see that it covers all the graphs that were mentioned in \citep{chen2020can}. One can see that all the formulae use the function that can be computed by $1-$WL. The number of vertices and the number of edges can easily be computed after doing one iteration of $1-$WL. Also, the degree sequence can be computed after running $1-$WL for one iteration. All other functions(formulae) are just the linear combination of the functions computed earlier, the number of vertices or the number of edges.
\end{proof}
\paragraph{Expressiveness Guarantee:}Using fragmentation, we show that all patterns of up to $4$ nodes can be exactly counted using only Local $1$-WL. For specific motifs (tailed triangle, chordal $C_4$, $3$-star), the decomposition ensures algorithmic tractability and high expressiveness (see Theorems~\ref{fragmentation-comparison} and~\ref{4-sized graph}).

The fragmentation technique thus enables scalable, modular, and highly expressive subgraph counting, leveraging learnable models for each subcomponent while ensuring practical feasibility and end-to-end differentiable training.

\section{Theoretical Comparison of Graph Neural Networks}\label{sec:compare_mk_local}
n this section we give a comparison of the time and expressiveness between the GNN hierarchies proposed in \citep{papp2022theory} and our methods. From \ref{local_wl_power}, it is clear that $k-$WL is less expressive than Local $k-$WL. Also, we have shown that the space and time required by Local $k-$WL are less compared to $k-$WL. 

$S_k$ is a model in which a vertex is given an attribute based on the number of connected induced subgraphs of size at most $k,$ the key vertex. Even though it searches locally, the number of non-isomorphic graphs may be too many for a small value of $k$. Suppose the radius of the induced subgraph is $r$; then it has to search up to the $r-$hop neighborhood. Using brute force would require $O(n_1^k)-$time to count the number of induced subgraphs of size $k,$ for each individual induced subgraph. To improve the time complexity, it either stores the previous computation, which requires a lot of space or further recomputes from scratch. Thus, it would require $O(t_k\times n_1^k)$, where $ t_{k}$ is the number of distinct graphs upto isomorphism of at most $k-$vertices. Using \ref{induced_subgraph_general}, one can easily see that running $k-$WL locally is more expressive than $S_{k+2}.$

The $N_k$ model has a preprocessing step in which it takes the $k-$hop neighborhood around vertex $v,$ and gives attributes based on the isomorphism type. In a dense graph, it is not feasible to solve the isomorphism problem in general, as the size of the induced subgraph may be some function of $n$. Also, using the best-known algorithm for graph isomorphism by \citet{babai2016graph}, the time required is $O(n_1^{O(\log n_1)}).$ %using Babai result. 
However, running Local $k-$WL would require $O(n_1^k)$. Also, there are rare examples of graphs that are $3-$WL equivalent and non-isomorphic. So, if we run $3-$WL locally, then most of times expressive power matches with $N_k.$

The $M_k$ model deletes a vertex $v$ and then runs $1-WL.$ \citet{papp2022theory} proposed that instead of deleting the vertices, $k$ vertices are marked in the local neighborhood and showed that it is more expressive than deletion. It identifies $k$ set of vertices in the local $r-$hop neighborhood of the graph. It would require $O(n_1^{(k+2)}\log (n_1))$ time as it has $O(n_1^k)$ many possibilities of choosing $k$ many vertices. It requires $O(n^2 \log n)$ time to run $1-$WL. The same time is required for Local $(k+1)-$WL. %That's why we compare $M_{k-1}$ with Local $k-$WL in \ref{tab:mk_local_comparison}. 
Also, it is known that with any $l$ markings and running $k-$WL is less expressive than running $(k+l)-$WL on the graph \citep{furer2017combinatorial}. So, if we plug in the value, we can see that running Local $k-$WL is more expressive than doing $l$ marking and running $1-$WL. One can get an intuition by comparing with the $(k+1)$ bijective pebble game. If we mark the vertices, then the marking pebbles get fixed and give less power the to spoiler. However, just running $k-$WL the spoiler is free to move all the pebbles. We present a simple proof that Local $k-$WL is at least as expressive as $M_{k-1}$.
\begin{theorem}
\label{$M_{k-1}$}
Local $k-$WL is atleast as expressive as $M _{k-1}$. 
\end{theorem}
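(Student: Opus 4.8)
The plan is to establish the two-step chain $M_{k-1}\preceq k\text{-WL}\preceq \text{Local }k\text{-WL}$, where $A\preceq B$ means ``$B$ is at least as expressive as $A$'', and to run the comparison on the common local subgraph $G_v^r$ on which both models operate. The second inequality is essentially free: the proof of \Cref{local_wl_power} already shows that running $k$-WL on $G_v^r$ with the distance-coloring is at least as expressive as running plain $k$-WL on $G_v^r$, since the distance colors only refine the initial isomorphism-type coloring. So the substance of the argument is the first inequality, namely that marking $k-1$ vertices and running $1$-WL is dominated by $k$-WL on the same subgraph.

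For this I would argue through the bijective pebble game, using the equivalence in \Cref{pebble_wl_relation}. Fix two local subgraphs $G_v^r$ and $H_u^r$ and suppose $M_{k-1}$ distinguishes them, i.e.\ there is a choice of $k-1$ marked vertices for which $1$-WL separates the two marked graphs. Since $1$-WL is characterized by the bijective $2$-pebble game, marking $k-1$ vertices amounts to pre-placing $k-1$ pebble pairs that the Spoiler is never permitted to move --- a color constraint that every bijection the Duplicator proposes must respect --- and then playing the ordinary $2$-pebble game with the two remaining pebbles. A Spoiler strategy that wins this constrained game uses at most $(k-1)+2 = k+1$ pebbles in total, of which $k-1$ stay immobilized throughout.

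The key observation is that this constrained game is merely a restriction of the unconstrained bijective $(k+1)$-pebble game $BP_{k+1}(G_v^r,H_u^r)$, in which the Spoiler may freely move all $k+1$ pebbles. A Spoiler strategy that wins while keeping $k-1$ pebbles fixed is, in particular, a legal winning strategy in $BP_{k+1}$, so the Spoiler wins $BP_{k+1}(G_v^r,H_u^r)$; by \Cref{pebble_wl_relation}, $k$-WL then distinguishes $G_v^r$ and $H_u^r$. This is exactly the pebble-game content of the cited bound of \cite{furer2017combinatorial} (that $l$ markings on top of $k$-WL are dominated by $(k+l)$-WL), specialized to base dimension $1$ and $l=k-1$, so I can alternatively invoke that result directly. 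Chaining with the second inequality gives that, on every local subgraph, the attributes produced by Local $k$-WL are at least as fine as those produced by $M_{k-1}$; since both models feed their per-vertex attributes into the same outer GNN aggregation over $V(G)$, refining the attributes can only increase distinguishing power, and Local $k$-WL is at least as expressive as $M_{k-1}$ on $G$ itself.

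The step I expect to be the main obstacle is making precise that the fixed markings genuinely correspond to immobilized pebbles rather than to a subtly different game: one must check that the Duplicator's admissible bijections in the marked $1$-WL game are exactly the color-preserving bijections that fix the $k-1$ marked pebble positions, so that the marked game and the constrained $BP_{k+1}$ game have identical legal move sets except for the Spoiler's inability to relocate those $k-1$ pebbles. A secondary point of care is the aggregation over all markings in $M_{k-1}$ --- the clean way to handle it is to note that if the multisets of $1$-WL outcomes over all markings differ between $G_v^r$ and $H_u^r$, then some single marking already witnesses a difference, to which the immobilized-pebble argument applies. Once this identification is clean, the restriction argument and the final transitivity are routine.
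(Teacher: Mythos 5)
Your proposal is correct and takes essentially the same route as the paper: the paper's proof likewise passes to the bijective $(k+1)$-pebble game on the local subgraphs, has the Spoiler spend $k-1$ pebbles pinning the marked vertices $(v_1,\ldots,v_{k-1})$, and then copies the winning strategy of the $2$-pebble ($1$-WL) Spoiler with the remaining pebbles, arguing by induction on rounds. Your framing of the markings as immobilized pebbles in a restricted $BP_{k+1}$ game, and your explicit note on resolving the multiset aggregation over markings to a single witnessing marking, are just a slightly more careful packaging of the same argument (the paper itself only gestures at the aggregation step and at the bound of F\"urer that you cite).
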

\begin{proof}
Let $G_{v}^r$ and $G_{u}^r$ be the induced subgraphs around the $r-$hop neighborhood for vertices $v$ and $u$, respectively. Let $M_{k}$ distinguish $G_{v}^r$ and $G_{u}^r$. We claim that Local $k-$WL can also distinguish the graph $G_{v}^r$ and $G_{u}^r$. To prove our claim, we use the pebble bijective game. $G_{v}^r$ is distinguished because there exists a tuple $(v_{1},v_{2},....v_{k-1})$ such that marking these vertices and running $1-$WL on the graph gives stabilised coloring of the vertices that does not match with that of $G_{u}^r.$ Now, consider two games. One game corresponds to $1-$WL and another to Local $k-$WL. For the first $(k-1)$ moves, the Spoiler chooses to play and places pebbles at $(v_{1},v_{2},....v_{k-1}).$ After that, in both games, there are two pebbles and the position of both games are the same. Let $S_1$ and $D_1$ be the spoiler and Duplicator in the $(k+1)$ bijective pebble game, and $S_2$ and $D_2$ be the spoiler and Duplicator in $2$ bijective pebble game. $S_1$ will follow the strategy of $S_2$ and $D_2$ follows the strategy of $D_1.$ We prove by induction on the number of rounds. Our induction hypothesis is that the position of games in both the games is the same and if $S_2$ wins, then $S_1$ also wins. 

\textit{Base case :} Duplicator $D_1$ proposes a bijection. $D_2$ will propose the same bijection. Now, $S_2$ places a pebble on some vertex $v.$ Similarly, $S_1$ will also place a pebble at $v.$ Note that the position of the game is the same, and if $S_2$ wins, then $S_1$ also wins. 

Now, using the induction hypothesis, assume that the position of both the games is the same and $S_2$ has not won till round $i.$ 

Now, consider the game at round $(i+1).$ If $S_2$ decides to play / remove a pebble, then $S_1$ will do the same. If $S_2$ decides to play a pebble, then $S_1$ also decides to play a pebble. So, $D_1$ proposes a bijective function. $D_2$ proposes the same bijective function. Now, $S_2$ places pebble at some vertex $u,$ then $S_1$ also places pebble at $u$. Thus, the position of both the game is the same and if $S_2$ wins, then $S_1$ will also win. 
\end{proof}

\section{Experiments}
\label{sec:experiments}
\subsection{Implementation Details}
We refer to our proposed model as \textbf{InSig}. In the experiments, we predict the counts of the following  substructures occurring as subgraphs :  triangles, $3$-Star, $2$-Star, chordal $C_4$, $K_4$, $C_4$, and tailed triangles. 
\begin{table}[ht]
\centering
\resizebox{0.9\textwidth}{!}{%
\begin{tabular}{@{}lrrrrrr@{}}
\toprule
\textbf{Task} &
  \textbf{\begin{tabular}[c]{@{}r@{}}Total pattern\\ count\end{tabular}} &
  \textbf{\begin{tabular}[c]{@{}r@{}}Zero Count\\ graphs\end{tabular}} &
  \textbf{\begin{tabular}[c]{@{}r@{}}Standard Deviation\\ of Count\end{tabular}} &
  \textbf{\begin{tabular}[c]{@{}r@{}}Average number\\ of Nodes\end{tabular}} &
  \textbf{\begin{tabular}[c]{@{}r@{}}Average number\\ of Edges\end{tabular}} &
  \textbf{\begin{tabular}[c]{@{}r@{}}Number\\ of graphs\end{tabular}} \\ \midrule
\textit{Triangle}        & $25209$  & $195$  & $3.072$  & \multirow{7}{*}{$18.7976$} & \multirow{7}{*}{$62.678$} & \multirow{7}{*}{$5000$} \\
\textit{2-Star}          & $429463$ & $0$    & $18.015$ &                          &                         &                       \\
\textit{3-Star}          & $309525$ & $0$    & $17.777$ &                          &                         &                       \\
\textit{Chordal}         & $19088$  & $1786$ & $4.742$  &                          &                         &                       \\
\textit{K4}              & $643$    & $4447$ & $0.387$  &                          &                         &                       \\
\textit{C4}              & $53002$  & $16$   & $6.938$  &                          &                         &                       \\
\textit{Tailed Triangle} & $177968$ & $195$  & $25.943$ &                          &                         &                       \\ \bottomrule
\end{tabular}%
}
\caption{\fontsize{9}{9}\selectfont{Dataset statistics. The total number of graphs in the dataset is $5000.$ We used $4000$ graphs for training, $500$ for validation and $500$ for testing. }}
\label{tab:dataset}
\end{table}
When counting larger substructures like $K_4$, $C_4$, and Tailed Triangles, we use the \textbf{fragmentation} technique with the help of a model learned to predict triangles, 3-Star, 2-Star, and Chordal $C_4$. For counting $K_4$, the task of \textbf{Pattern Counting} Component is to learn which edges to prune. Later \textbf{Local Count Learning} component counts the number of substructures,  which in the case of $K_4$, is triangles. Once the number of triangles is predicted using models learned to count triangles, the normalization factor is learned to output the global count. In other structures like tailed triangles, we have two patterns to learn: the nodes in the $1$-hop neighborhood and the edges between them. During the inference phase, we use a rounding function, as the counts are integer numbers. 
\subsection{Hyperparameters}
We use two GIN Convolutional layers for the Pattern Learning Local Count Learning component. We consider Linear transformations as readout layers in the previously mentioned components. Since we need to classify whether an edge should be present or not in a subgraph for counting a subpattern, we use Binary Cross-Entropy (BCE) loss for the pattern learning component. For the local counting and global counting components, we use Mean Absolute Error (MAE). For the models we have considered, as there can be paths of lengths more than 1 in the subgraph, $2$ \emph{GINConv} layers are sufficient to capture the information well. 

We use a learning rate of $1e-4$ and a batch size of $1.$ We also experimented with different hidden dimensions for the node embeddings and obtained the best results when we used $4$ as a hidden dimension size. The experiments were conducted using an NVIDIA A100 40GB GPU. The source code of the implementation is available at this \href{https://github.com/Roy-Shubhajit/InSig-GNN}{\faGithub\ Github Link}.
\subsection{Experimental Results}

\begin{table}[ht]
\centering
\resizebox{0.9\textwidth}{!}{%
\begin{tabular}{@{}crrrrrrr@{}}
\toprule
\multirow{2}{*}{\textbf{Models}} &
  \multicolumn{4}{c}{\textbf{Without Fragmentation}} &
  \multicolumn{3}{c}{\textbf{Fragmentation}} \\ \cmidrule(l){2-5}  \cmidrule(l){6-8}
 &
  \textit{Triangle} &
  \textit{3-Star} &
  \textit{2-Star} &
  \textit{Chorcal C4} &
  \textit{$K_4$} &
  \textit{$C_4$} &
  \textit{Tailed Triangle} \\ \midrule
\multicolumn{1}{l}{\textbf{ID-GNN}} & 6.00E-04 & NA       & NA & 4.52E-02 & 2.60E-03 & 2.20E-03 & 1.05E-01 \\
\multicolumn{1}{l}{\textbf{NGNN}}   & 3.00E-04 & NA       & NA & 3.92E-02 & 4.50E-03 & 1.30E-03 & 1.04E-01 \\
\multicolumn{1}{l}{\textbf{GNNAK+}} & 4.00E-04 & 1.50E-02 & NA & 1.12E-02 & 4.90E-03 & 4.10E-03 & 4.30E-03 \\
\multicolumn{1}{l}{\textbf{PPGN}}   & 3.00E-04 & NA       & NA & 1.50E-03 & 1.65E-01 & 9.00E-04 & 2.60E-03 \\
\multicolumn{1}{l}{\textbf{I2-GNN}} & 4.00E-04 & NA       & NA & 1.00E-03 & 3.00E-04 & 1.60E-03 & 1.10E-03 \\
\multicolumn{1}{l}{\textbf{InSig}} &
  \textbf{0.00E-00} &
  \textbf{0.00E-00} &
  \textbf{0.00E-00} &
  \textbf{0.00E-00} &
  \textbf{0.00E-00} &
  \textbf{0.00E-00} &
  \textbf{0.00E-00} \\ \bottomrule
\end{tabular}%
}
\caption{\fontsize{9}{9}\selectfont{MAE for the subgraph count of different patterns. Some results, such as \emph{$2-$star}, \emph{$3-$star}, are not conducted by the given baselines; therefore, it is mentioned NA.}}
\label{tab:results}
\end{table}
The dataset used for the experiments is a random graphs dataset prepared in \citep{chen2020can}. In Table \ref{tab:dataset}, we report the dataset statistics, specifically the counts of the various patterns used for our experiments, number of graphs where these patterns do not appear, and so on. We report our experiments' Mean Absolute Error (MAE) in Table \ref{tab:results}. We compare our results with those reported in \emph{ID-GNN} \citep{https://doi.org/10.48550/arxiv.2101.10320}, \emph{NGNN} \citep{zhang2021nested}, \emph{GNNAK+} \citep{zhao2022from}, \emph{PPGN} \citep{https://doi.org/10.48550/arxiv.1905.11136} and \emph{I2-GNN} \citep{huang2023boosting}.
We can observe that our approach of predicting substructures, the local counts, and then predicting the global counts, achieves zero error for all of the tests.

For all the patterns, we observed that the model gets to zero error after only 2 to 3 epochs. From Table \ref{tab:param}, it can be observed that our approach requires a considerably smaller number of parameters and beats all the baselines.
Our inference time comprises model inference as well as the graph preprocessing time, where it creates the set of subgraphs corresponding to each node in the graph.

\begin{table}[ht]
\centering
\resizebox{0.5\textwidth}{!}{%
\begin{tabular}{lrrr}
\toprule
\textbf{Models} &
  \textbf{\begin{tabular}[c]{@{}r@{}}Number of\\ Parameters\end{tabular}} &
  \textbf{\begin{tabular}[c]{@{}r@{}}Inference\\ Time (ms)\end{tabular}} &
  \textbf{\begin{tabular}[c]{@{}r@{}}Memory\\ Usage (GB)\end{tabular}} \\ \midrule
\textbf{ID-GNN} & 102K         & 5.73        & 2.35         \\
\textbf{NGNN}   & 127K         & 6.03        & 2.34         \\
\textbf{GNNAK+} & 251K         & 16.07       & 2.35         \\
\textbf{PPGN}   & 96K          & 35.33       & 2.3          \\
\textbf{I2-GNN} & 143K         & 20.62       & 3.59         \\
\textbf{InSig}  & \textbf{268} & \textbf{20} & \textbf{1.2} \\ \bottomrule
\end{tabular}%
}
\caption{\fontsize{9}{9}\selectfont{The table shows the comparison of the number of parameters required by the baselines and \emph{InSig Model}. The values shown here correspond to the triangle counting task with the hidden dimension set as $4$.\label{tab:param}}}
\end{table}
\begin{figure}[ht]
    \centering
    \begin{subfigure}{0.45\textwidth}
        \centering
        \includegraphics[width=\linewidth]{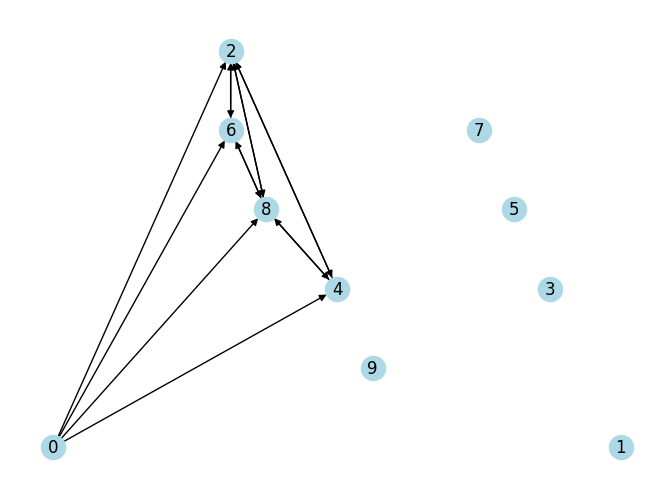}
        \caption{\fontsize{9}{9}\selectfont{Figure shows an example of $G^1_0$ for a random graph with $10$ nodes}}
        \label{fig:sub1}
    \end{subfigure}%
    \hfill
    \begin{subfigure}{0.45\textwidth}
        \centering
        \includegraphics[width=\linewidth]{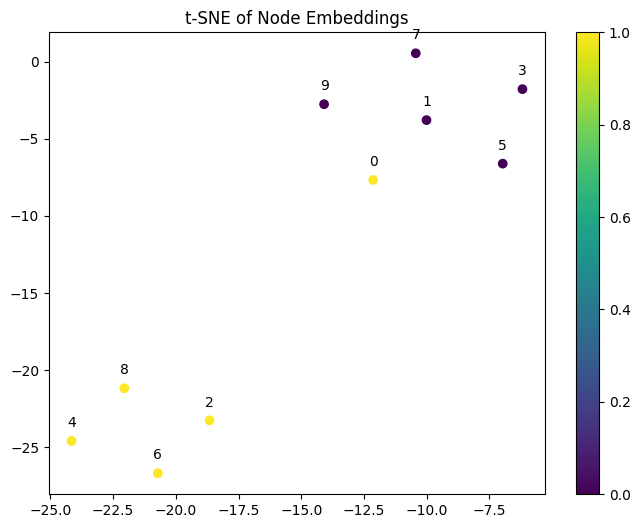}
        \caption{\fontsize{9}{9}\selectfont{t-SNE plot of the updated node embeddings from the Pattern learning component}}
        \label{fig:sub2}
    \end{subfigure}
    \caption{\fontsize{9}{9}\selectfont{Figure showing the input and updated node embedding from the pattern learning component}}
    \label{fig:both}
\end{figure}
In Figure \ref{fig:both}, we show an example of a graph sent as input to the pattern learning component and the updated node embeddings outputted from the model. In Figure \ref{fig:sub1}, we see that the root node is node $0$ and there are directed edges from the root node to its neighboring nodes. Figure \ref{fig:sub2} shows the updated node embeddings of each node in the graph. We can observe that $8, 2, 4, 6$ nodes can be separated using some separator from the rest of the nodes. This indicates that the model is able to learn the updated node embedding such that we can distinguish nodes which has edges between them. 

\section{Conclusion}\label{conclusion}
Subgraph counting is a fundamental combinatorial problem that arises in the study of graphs and graph structured problems as well as data. Exactly counting the number of subgraphs in a graph is  also a computationally hard problem. In this paper,  we present a learnable algorithm that is able to compute exact counts of a number of commonly occurring patterns. The proposed fragmentation method has proven to be  beneficial for the task of counting subgraphs. Additionally, since fragmentation results in smaller subgraphs, the parameter requirements of the proposed GNN-based architecture is orders of magnitude less than previous methods. As future work, we plan to analyse the fragmentation algorithm from a theoretical perspective. We also plan to investigate the effectiveness of this technique for increasing the expressiveness of GNNs for downstream tasks. Along with that, the current approach is limited to patterns that can be computed using $1$-hop or $2-$hop subgraphs, hence we can study our algorithm for bigger patterns.

\bibliographystyle{plainnat}
\bibliography{reference}
%%%%%%%%%%%%%%%%%%%%%%%%%%%%%%%%%%%%%%%%%%%%%%%%%%%%%%%%%%%%

%\appendix

%\section{Technical Appendices and Supplementary Material}
%Technical appendices with additional results, figures, graphs and proofs may be submitted with the paper submission before the full submission deadline (see above), or as a separate PDF in the ZIP file below before the supplementary material deadline. There is no page limit for the technical appendices.

%%%%%%%%%%%%%%%%%%%%%%%%%%%%%%%%%%%%%%%%%%%%%%%%%%%%%%%%%%%%

\newpage
\end{document}